\setlist[itemize]{leftmargin=5mm}
\newcommand{\D}{{\mathbb D}}
\renewcommand{\Re}{{\mathbb R}}
\newtheorem{theorem}{Theorem}
\newtheorem{lemma}[theorem]{Lemma}
\theoremstyle{definition}
\newtheorem{definition}{Definition}[section] 
\DeclarePairedDelimiterX{\inp}[2]{\langle}{\rangle}{#1,#2}
\newcommand{\labeltext}[2]{%
  \@bsphack
  \csname phantomsection\endcsname 
  \def\@currentlabel{#1}{\label{#2}}%
  \@esphack
}
\newcommand\footnoteref[1]{\protected@xdef\@thefnmark{\ref{#1}}\@footnotemark}
\title{Hyperbolic Neural Networks}
\author{
  Octavian-Eugen Ganea$^*$ \\
  Department of Computer Science\\
  ETH Z\"urich\\
  Zurich, Switzerland\\
  \texttt{octavian.ganea@inf.ethz.ch} \\
  \And
  Gary B{\'e}cigneul\thanks{Equal contribution.}  \\
  Department of Computer Science\\
  ETH Z\"urich\\
  Zurich, Switzerland\\
  \texttt{gary.becigneul@inf.ethz.ch} \\
  \And
  Thomas Hofmann  \\
  Department of Computer Science\\
  ETH Z\"urich\\
  Zurich, Switzerland\\
  \texttt{thomas.hofmann@inf.ethz.ch} \\
}
\begin{document}

\maketitle

\begin{abstract}
Hyperbolic spaces have recently gained momentum in the context of machine learning due to their high capacity and tree-likeliness properties. However, the representational power of hyperbolic geometry is not yet on par with Euclidean geometry, mostly because of the absence of corresponding hyperbolic neural network layers. This makes it hard to use hyperbolic embeddings in downstream tasks. Here, we bridge this gap in a principled manner by combining the formalism of M\"obius gyrovector spaces with the Riemannian geometry of the Poincar\'e model of hyperbolic spaces. As a result, we derive hyperbolic versions of important deep learning tools: multinomial logistic regression, feed-forward and recurrent neural networks such as gated recurrent units. This allows to embed sequential data and perform classification in the hyperbolic space. Empirically, we show that, even if hyperbolic optimization tools are limited, hyperbolic sentence embeddings either outperform or are on par with their Euclidean variants on textual entailment and noisy-prefix recognition tasks.
\end{abstract}


\section{Introduction}
It is common in machine learning to represent data as being embedded in the Euclidean space $\mathbb{R}^n$. The main reason for such a choice is simply convenience, as this space has a vectorial structure, closed-form formulas of distance and inner-product, and is the natural generalization of our intuition-friendly, visual three-dimensional space. Moreover, embedding entities in such a continuous space allows to feed them as input to neural networks, which has led to unprecedented performance on a broad range of problems, including sentiment detection~\citep{kim2014convolutional}, machine translation~\citep{bahdanau2014neural}, textual entailment~\citep{rocktaschel2015reasoning} or knowledge base link prediction~\citep{nickel2011three,bordes2013translating}.

Despite the success of Euclidean embeddings, recent research has proven that many types of complex data (e.g. graph data) from a multitude of fields (e.g. Biology, Network Science, Computer Graphics or Computer Vision) exhibit a highly non-Euclidean latent anatomy~\citep{bronstein2017geometric}. In such cases,
the Euclidean space does not provide the most powerful or meaningful geometrical representations. For example,~\cite{de2018representation} shows that arbitrary tree structures cannot be embedded with arbitrary low distortion (i.e. almost preserving their metric) in the Euclidean space with unbounded number of dimensions, but this task becomes surprisingly easy in the hyperbolic space with only 2 dimensions where the exponential growth of distances matches the exponential growth of nodes with the tree depth. 

The adoption of neural networks and deep learning in these non-Euclidean settings has been rather limited until very recently, the main reason being the non-trivial or impossible principled generalizations of basic operations (e.g. vector addition, matrix-vector multiplication, vector translation, vector inner product) as well as, in more complex geometries, the lack of closed form expressions for basic objects (e.g. distances, geodesics, parallel transport). Thus, classic tools such as multinomial logistic regression (MLR), feed forward (FFNN) or recurrent neural networks (RNN) did not have a correspondence in these geometries.

\textit{How should one generalize deep neural models to non-Euclidean domains ?} In this paper we address this question for one of the simplest, yet useful, non-Euclidean domains: spaces of constant negative curvature, i.e. \textit{hyperbolic}. Their tree-likeness properties have been extensively studied~\citep{gromov1987hyperbolic,hamann_2017,ungar2008gyrovector} and used to visualize large taxonomies~\citep{lamping1995focus+} or to embed heterogeneous complex networks~\citep{krioukov2010hyperbolic}. In machine learning, recently, hyperbolic representations greatly outperformed Euclidean embeddings for hierarchical, taxonomic or entailment data \citep{nickel2017poincar,de2018representation,ganea2018hyperbolic}. Disjoint subtrees from the latent hierarchical structure surprisingly disentangle and cluster in the embedding space as a simple reflection of the space's negative curvature. However, appropriate deep learning tools are needed to embed feature data in this space and use it in downstream tasks. For example, implicitly hierarchical sequence data (e.g. textual entailment data, phylogenetic trees of DNA sequences or hierarchial captions of images) would benefit from suitable hyperbolic RNNs.

The \textbf{main contribution} of this paper is to bridge the gap between hyperbolic and Euclidean geometry in the context of neural networks and deep learning by generalizing in a principled manner both the basic operations as well as multinomial logistic regression (MLR), feed-forward (FFNN), simple and gated (GRU) recurrent neural networks (RNN) to the Poincar\'e model of the hyperbolic geometry. We do it by connecting the theory of gyrovector spaces and generalized M\"obius transformations introduced by  \citep{albert2008analytic,ungar2008gyrovector} with the Riemannian geometry properties of the manifold. We smoothly parametrize basic operations and objects in all spaces of constant negative curvature using a unified framework that depends only on the curvature value. Thus, we show how Euclidean and hyperbolic spaces can be continuously deformed into each other. On a series of experiments and datasets we showcase the effectiveness of our hyperbolic neural network layers compared to their "classic" Euclidean variants on textual entailment and noisy-prefix recognition tasks. We hope that this paper will open exciting future directions in the nascent field of Geometric Deep Learning.

\section{The Geometry of the Poincar\'e Ball}\label{sec:background}

\subsection{Basics of Riemannian geometry}
We briefly introduce basic concepts of differential geometry largely needed for a principled generalization of Euclidean neural networks. For more rigorous and in-depth expositions, see \cite{spivak1979comprehensive,hopper2010ricci}.

An \textit{$n$-dimensional manifold} $\mathcal{M}$ is a space that can locally be approximated by $\mathbb{R}^n$: it is a generalization to higher dimensions of the notion of a 2D surface. For $x\in\mathcal{M}$, one can define the \textit{tangent space} $T_x\mathcal{M}$ of $\mathcal{M}$ at $x$ as the first order linear approximation of $\mathcal{M}$ around $x$. A \textit{Riemannian metric} $g=(g_x)_{x\in\mathcal{M}}$ on $\mathcal{M}$ is a collection of inner-products $g_x:T_x\mathcal{M}\times T_x\mathcal{M}\to\mathbb{R}$ varying smoothly with $x$. A \textit{Riemannian manifold} $(\mathcal{M},g)$ is a manifold $\mathcal{M}$ equipped with a Riemannian metric $g$. Although a choice of a Riemannian metric $g$ on $\mathcal{M}$ only seems to define the geometry locally on $\mathcal{M}$, it induces global distances by integrating the length (of the speed vector living in the tangent space) of a shortest path between two points:
\begin{equation}
d(x,y)=\inf_\gamma\int_0^1\sqrt{g_{\gamma(t)}(\dot{\gamma}(t),\dot{\gamma}(t))}dt,
\end{equation}
where $\gamma\in\mathcal{C}^\infty([0,1],\mathcal{M})$ is such that $\gamma(0)=x$ and $\gamma(1)=y$. A smooth path $\gamma$ of minimal length between two points $x$ and $y$ is called a \textit{geodesic}, and can be seen as the generalization of a straight-line in Euclidean space. The \textit{parallel transport} $P_{x\to y}:T_x M\to T_y M$ is a linear isometry between tangent spaces which corresponds to moving tangent vectors along geodesics and defines a canonical way to connect tangent spaces. The \textit{exponential map} $\exp_x$ at $x$, when well-defined, gives a way to project back a vector $v$ of the tangent space $T_x\mathcal{M}$ at $x$, to a point $\exp_x(v)\in\mathcal{M}$ on the manifold. This map is often used to parametrize a geodesic $\gamma$ starting from $\gamma(0):=x\in\mathcal{M}$ with unit-norm direction $\dot{\gamma}(0):=v\in T_x\mathcal{M}$ as $t\mapsto\exp_x(tv)$. For \textit{geodesically complete manifolds}, such as the Poincar\'e ball considered in this work, $\exp_x$ is well-defined on the full tangent space $T_x\mathcal{M}$. Finally, a metric $\tilde{g}$ is said to be \textit{conformal} to another metric $g$ if it defines the same angles, \textit{i.e.}
\begin{equation}
\dfrac{\tilde{g}_x(u,v)}{\sqrt{\tilde{g}_x(u,u)}\sqrt{\tilde{g}_x(v,v)}}=\dfrac{g_x(u,v)}{\sqrt{g_x(u,u)}\sqrt{g_x(v,v)}},
\end{equation}
for all $x\in\mathcal{M}$, $u,v\in T_x\mathcal{M}\setminus\{\textbf{0}\}$. This is equivalent to the existence of a smooth function $\lambda:\mathcal{M}\to\mathbb{R}$, called the \textit{conformal factor}, such that $\tilde{g}_x=\lambda_x^2 g_x$ for all $x\in\mathcal{M}$.

\subsection{Hyperbolic space: the Poincar\'e ball}

The hyperbolic space has five isometric models that one can work with \citep{cannon1997hyperbolic}. Similarly as in \citep{nickel2017poincar} and \citep{ganea2018hyperbolic}, we choose to work in the Poincar\'e ball. The \textit{Poincar\'e ball model} $(\mathbb D^n, g^{\mathbb D})$ is defined by the manifold $\mathbb D^n = \{ x \in \mathbb \Re^n: \| x\| <1\}$ equipped with the following Riemannian metric:
\begin{align}\label{eq:metric_tensor}
g^\D_x = \lambda_x^2 g^E,\quad\text{where\ } \lambda_x := \frac 2 {1- \|x\|^2},
\end{align}
$g^E=\textbf{I}_n$ being the Euclidean metric tensor. Note that the hyperbolic metric tensor is conformal to the Euclidean one. The \textit{induced distance} between two points $x,y\in\D^n$ is known to be given by
\begin{equation}\label{eq:dist_D}
d_\D(x,y)=\cosh^{-1}\left(1+2\dfrac{\Vert x-y\Vert^2}{(1-\Vert x\Vert^2)(1-\Vert y\Vert^2)}\right).
\end{equation}
Since the Poincar\'e ball is conformal to Euclidean space, the \textit{angle} between two vectors $u,v\in T_x\D^n\setminus\{\textbf{0}\}$ is given by 
\begin{equation}
\cos(\angle(u,v))=\dfrac{g^\D_x(u,v)}{\sqrt{g^\D_x(u,u)}\sqrt{g^\D_x(v,v)}}=\dfrac{\langle u,v\rangle}{\Vert u\Vert\Vert v\Vert}.
\label{eq:hyp_angle_tangent_space}
\end{equation}

\subsection{Gyrovector spaces}\label{sec:gyro_sec}

In Euclidean space, natural operations inherited from the vectorial structure, such as vector addition, subtraction and scalar multiplication are often useful. The framework of \textit{gyrovector spaces} provides an elegant non-associative algebraic formalism for hyperbolic geometry just as vector spaces provide the algebraic setting for Euclidean geometry \citep{albert2008analytic,ungar2001hyperbolic,ungar2008gyrovector}. 

In particular, these operations are used in special relativity, allowing to add speed vectors belonging to the Poincar\'e ball of radius $c$ (the celerity, \textit{i.e.} the speed of light) so that they remain in the ball, hence not exceeding the speed of light.

We will make extensive use of these operations in our definitions of hyperbolic neural networks.

For $c\geq 0$, denote\footnote{We take different notations as in \cite{ungar2001hyperbolic} where the author uses $s=1/\sqrt{c}$.} by $\D_c^n:=\{x\in\mathbb{R}^n\mid c\Vert x\Vert^2<1\}$. Note that if $c=0$, then $\D_c^n=\Re^n$; if $c>0$, then $\D_c^n$ is the open ball of radius $1/\sqrt{c}$. If $c=1$ then we recover the usual ball $\D^n$.

\paragraph{M\"obius addition.} The \textit{M\"obius addition} of $x$ and $y$ in $\D_c^n$ is defined as 
\begin{equation}\label{eq:mobius_add}
x \oplus_c y := \dfrac{(1+2 c\langle x,y\rangle+c\Vert y\Vert^2)x+(1-c\Vert x\Vert^2)y}{1+2 c\langle x,y\rangle+ c^2\Vert x\Vert^2\Vert y\Vert^2}.
\end{equation}
In particular, when $c=0$, one recovers the Euclidean addition of two vectors in $\Re^n$. Note that without loss of generality, the case $c>0$ can be reduced to $c=1$. Unless stated otherwise, we will use $\oplus$ as $\oplus_1$ to simplify notations. For general $c>0$, this operation is not commutative nor associative. However, it satisfies $x\oplus_c \textbf{0}=\textbf{0}\oplus_c x=\textbf{0}$. Moreover, for any $x,y\in\D^n_c$, we have $(-x)\oplus_c x=x\oplus_c(-x)=\textbf{0}$ and $(-x)\oplus_c(x\oplus_c y)=y$ (left-cancellation law). The \textit{M\"obius substraction} is then defined by the use of the following notation: $x\ominus_c y:=x\oplus_c(- y)$. See \cite[section 2.1]{vermeer2005geometric} for a geometric interpretation of the M\"obius addition. 

\paragraph{M\"obius scalar multiplication.} For $c>0$, the \textit{M\"obius scalar multiplication} of $x\in\D_c^n\setminus\{\textbf{0}\}$ by $r\in\mathbb{R}$ is defined as 
\begin{equation}\label{eq:mobius_mult}
r\otimes_c x := (1/\sqrt{c}) \tanh(r\tanh^{-1}(\sqrt{c}\Vert x\Vert))\dfrac{x}{\Vert x\Vert},
\end{equation}
and $r\otimes_c\textbf{0}:=\textbf{0}$. Note that similarly as for the M\"obius addition, one recovers the Euclidean scalar multiplication when $c$ goes to zero: $\lim_{c\to 0} r\otimes_c x=rx$. This operation satisfies desirable properties such as $n\otimes_c x=x\oplus_c\dots\oplus_c x$ ($n$ additions), $(r+r')\otimes_c x=r\otimes_c x\oplus_c r'\otimes_c x$ (scalar distributivity\footnote{$\otimes_c$ has priority over $\oplus_c$ in the sense that $a\otimes_c b\oplus_c c:=(a\otimes_c b)\oplus_c c$ and $a\oplus_c b\otimes_c c:=a\oplus_c (b\otimes_c c)$.}), $(rr')\otimes_c x=r\otimes_c(r'\otimes_c x)$ (scalar associativity) and $\vert r\vert\otimes_c x/\Vert r\otimes_c x\Vert = x/\Vert x\Vert$ (scaling property). 

\paragraph{Distance.} If one defines the generalized hyperbolic metric tensor $g^c$ as the metric conformal to the Euclidean one, with conformal factor $\lambda_x^c:=2/(1- c\|x\|^2)$, then the induced distance function on $(\D_c^n,g^c)$ is given by\footnote{The notation $-x\oplus_c y$ should always be read as $(-x)\oplus_c y$ and not $-(x\oplus_c y)$.}
\begin{equation}\label{eq:mobius_dist}
d_c(x,y)=(2/\sqrt{c})\tanh^{-1}\left(\sqrt{c}\Vert -x\oplus_c y\Vert\right).
\end{equation}
Again, observe that $\lim_{c\to 0} d_c(x,y)=2\Vert x-y\Vert$, \textit{i.e.} we recover Euclidean geometry in the limit\footnote{The factor $2$ comes from the conformal factor $\lambda_x=2/(1-\|x\|^2)$, which is a convention setting the curvature to $-1$.}. Moreover, for $c=1$ we recover $d_\D$ of Eq.~(\ref{eq:dist_D}).

\paragraph{Hyperbolic trigonometry.} Similarly as in the Euclidean space, one can define the notions of hyperbolic angles or \textit{gyroangles} (when using the $\oplus_c$), as well as hyperbolic law of sines in the generalized Poincar\'e ball $(\D_c^n,g^c)$. We make use of these notions in our proofs. See Appendix~\ref{sec:hyp_trig}. 

\subsection{Connecting Gyrovector spaces and Riemannian geometry of the Poincar\'e ball}\label{ssec:connect_gyro_riem}

In this subsection, we present how geodesics in the Poincar\'e ball model are usually described with M\"obius operations, and push one step further the existing connection between gyrovector spaces and the Poincar\'e ball by finding new identities involving the exponential map, and parallel transport.

In particular, these findings provide us with a simpler formulation of M\"obius scalar multiplication, yielding a natural definition of matrix-vector multiplication in the Poincar\'e ball.

\paragraph{Riemannian gyroline element.} The Riemannian gyroline element is defined for an infinitesimal $dx$ as $ds:=(x+dx)\ominus_c x$, and its size is given by \citep[section 3.7]{ungar2008gyrovector}:
\begin{align}
\Vert ds\Vert = \| (x + dx) \ominus_c x\| =\|dx\|/(1 - c\|x\|^2).
\end{align}
What is remarkable is that it turns out to be identical, up to a scaling factor of $2$, to the usual line element $2\|dx\|/(1 - c\|x\|^2)$ of the Riemannian manifold $(\D_c^n,g^c)$.

\paragraph{Geodesics.} The geodesic connecting points $x,y \in \D_c^n$ is shown in \citep{albert2008analytic,ungar2008gyrovector} to be given by:
\begin{align}
\gamma_{x\rightarrow y}(t) := x \oplus_c (-x \oplus_c y) \otimes_c t, \quad  \text{with\ } \gamma_{x\to y}: \Re \rightarrow \D^n_c \text{\ s.t.\ } \gamma_{x\to y}(0) = x \text{\ and\ } \gamma_{x\to y}(1) = y.
\label{eq:geodesic_2pts}
\end{align}
Note that when $c$ goes to $0$, geodesics become straight-lines, recovering Euclidean geometry. In the remainder of this subsection, we connect the gyrospace framework with Riemannian geometry. 

\begin{lemma}
For any $x \in \D^n$ and $v \in T_x\D^n_c$ s.t. $g_x^c(v,v) = 1$, the \textbf{unit-speed geodesic} starting from $x$ with direction $v$ is given by:
\begin{align}
\gamma_{x,v}(t) = x \oplus_c \left(\tanh\left(\sqrt{c}\frac{t}{2} \right)\frac{v}{\sqrt{c}\|v\|}\right),\ \text{where\ }\gamma_{x,v}: \Re \to \D^n \text{\ s.t.\ } \gamma_{x,v}(0) = x \text{\ and\ } \dot{\gamma}_{x,v}(0) = v.
\label{eq:gyro_unitspeed_geodesic}
\end{align}
\end{lemma}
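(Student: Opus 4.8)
The plan is to reduce the claim to the already-known geodesic formula~\eqref{eq:geodesic_2pts} by rewriting the right-hand side of~\eqref{eq:gyro_unitspeed_geodesic} as a M\"obius scalar multiple, and then to pin down the initial velocity by a first-order expansion of M\"obius addition around $\mathbf{0}$.

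First I would introduce $w := \tanh(\sqrt c/2)\, v/(\sqrt c\|v\|)$, which lies in $\D^n_c$. Since $\sqrt c\|w\| = \tanh(\sqrt c/2)$, we get $\tanh^{-1}(\sqrt c\|w\|) = \sqrt c/2$ and $w/\|w\| = v/\|v\|$, so the definition~\eqref{eq:mobius_mult} yields $t\otimes_c w = (1/\sqrt c)\tanh(\sqrt c\, t/2)\, v/\|v\| = \tanh(\sqrt c\, t/2)\, v/(\sqrt c\|v\|)$. Hence $\gamma_{x,v}(t) = x\oplus_c(t\otimes_c w)$. By the left-cancellation law $-x\oplus_c(x\oplus_c w) = w$, so with $y := x\oplus_c w$ formula~\eqref{eq:geodesic_2pts} reads exactly $\gamma_{x,v}(t) = \gamma_{x\to y}(t)$. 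In particular $\gamma_{x,v}$ is a geodesic (hence of constant speed), and $\gamma_{x,v}(0) = x\oplus_c\mathbf{0} = x$.

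It then remains to compute $\dot\gamma_{x,v}(0)$. Writing $\gamma_{x,v} = (x\oplus_c\,\cdot\,)\circ h$ with $h(t) := \tanh(\sqrt c\, t/2)\, v/(\sqrt c\|v\|)$, one has $h(0) = \mathbf{0}$ and, since $\tanh' = 1-\tanh^2$, $h'(0) = v/(2\|v\|)$. The key step is the differential of $y\mapsto x\oplus_c y$ at $y = \mathbf{0}$: expanding the numerator and denominator of~\eqref{eq:mobius_add} to first order in $y$, the contributions proportional to $\langle x,y\rangle\, x$ cancel between numerator and denominator, leaving $x\oplus_c y = x + (1-c\|x\|^2)\, y + O(\|y\|^2)$, so the differential at $\mathbf{0}$ is multiplication by $1-c\|x\|^2 = 2/\lambda^c_x$. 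By the chain rule $\dot\gamma_{x,v}(0) = (1-c\|x\|^2)\, v/(2\|v\|) = v/(\lambda^c_x\|v\|)$. The hypothesis $g^c_x(v,v) = (\lambda^c_x)^2\|v\|^2 = 1$ forces $\lambda^c_x\|v\| = 1$, hence $\dot\gamma_{x,v}(0) = v$, and by constancy of speed $\sqrt{g^c_{\gamma_{x,v}(t)}(\dot\gamma_{x,v}(t),\dot\gamma_{x,v}(t))} \equiv \sqrt{g^c_x(v,v)} = 1$, which is the assertion. As an independent check of unit speed, note that by left-cancellation $-x\oplus_c\gamma_{x,v}(t) = h(t)$ with $\sqrt c\|h(t)\| = \tanh(\sqrt c|t|/2)$, so~\eqref{eq:mobius_dist} gives $d_c(\gamma_{x,v}(t),x) = (2/\sqrt c)\tanh^{-1}(\tanh(\sqrt c|t|/2)) = |t|$.

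I expect the only genuinely computational point to be the first-order expansion of M\"obius addition around $\mathbf{0}$ — in particular the cancellation of the $\langle x,y\rangle\, x$ terms, which is exactly what makes the differential a clean scalar multiple of the identity; everything else (the $\tanh$ identities, left-cancellation, and reading off constant speed from~\eqref{eq:geodesic_2pts}) is bookkeeping.
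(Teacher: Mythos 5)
Your proposal is correct and follows essentially the same route as the paper's (very terse) proof: identify the right-hand side with the geodesic formula of Eq.~(\ref{eq:geodesic_2pts}) and use the distance formula~(\ref{eq:mobius_dist}) to confirm unit speed. You go further than the paper in one useful respect — explicitly computing $\dot{\gamma}_{x,v}(0)=v$ via the first-order expansion $x\oplus_c y = x + (1-c\Vert x\Vert^2)y + O(\Vert y\Vert^2)$, a step the paper delegates to an external reference — and that computation is correct.
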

\vspace{-0.4cm}
\begin{proof} One can use Eq.~(\ref{eq:geodesic_2pts}) and reparametrize it to unit-speed using Eq.~(\ref{eq:mobius_dist}). Alternatively, direct computation and identification with the formula in \citep[Thm. 1]{ganea2018hyperbolic} would give the same result. Using Eq.~(\ref{eq:mobius_dist}) and Eq.~(\ref{eq:gyro_unitspeed_geodesic}), one can sanity-check that $d_c(\gamma(0), \gamma(t)) = t, \forall t \in [0,1]$. 
\end{proof}

\paragraph{Exponential and logarithmic maps.} The following lemma gives the closed-form derivation of exponential and logarithmic maps.
\begin{lemma}
For any point $x \in \D^n_c$, the exponential map $\exp_x^c: T_x\D^n_c \to \D^n_c$ and the logarithmic map $\log^c_x: \D^n_c \to T_x\D^n_c$ are given for $v\neq\textbf{0}$ and $y\neq x$ by:
\begin{align}
\hspace{-0.3cm}
\exp_x^c(v) = x \oplus_c \left(\tanh\left(\sqrt{c}\frac{\lambda_x^c \|v\|}{2} \right) \frac{v}{\sqrt{c}\|v\|} \right),\ \log^c_x(y) = \frac{2}{\sqrt{c}\lambda^c_x} \tanh^{-1}(\sqrt{c}\|-x\oplus_c y\|) \frac{-x\oplus_c y}{\|-x\oplus_c y\|}.
\label{eq:gyro_exp_map}
\end{align}
\end{lemma}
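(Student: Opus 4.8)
The plan is to derive the exponential map from the unit-speed geodesic of the previous lemma, and then obtain the logarithmic map by inverting it. First I would recall that for any $x\in\D^n_c$ and any nonzero $w\in T_x\D^n_c$, the geodesic $\gamma$ with $\gamma(0)=x$ and $\dot\gamma(0)=w$ is, by definition of the exponential map, given by $t\mapsto\exp_x^c(tw)$, and in particular $\exp_x^c(w)=\gamma(1)$. To use the earlier lemma, which is stated for unit-speed directions, I would normalize: set $v:=w/\sqrt{g_x^c(w,w)}$ so that $g_x^c(v,v)=1$, and note $g_x^c(w,w)=(\lambda_x^c)^2\|w\|^2$ since $g^c$ is conformal to the Euclidean metric with factor $(\lambda_x^c)^2$; hence $\sqrt{g_x^c(w,w)}=\lambda_x^c\|w\|$ and $v=w/(\lambda_x^c\|w\|)$. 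By the homogeneity of geodesics, $\gamma_{x,w}(t)=\gamma_{x,v}\big(t\,\sqrt{g_x^c(w,w)}\big)=\gamma_{x,v}\big(t\,\lambda_x^c\|w\|\big)$, so evaluating at $t=1$ and substituting into Eq.~(\ref{eq:gyro_unitspeed_geodesic}) gives
\begin{equation}
\exp_x^c(w)=\gamma_{x,v}(\lambda_x^c\|w\|)=x\oplus_c\left(\tanh\left(\sqrt{c}\,\frac{\lambda_x^c\|w\|}{2}\right)\frac{v}{\sqrt{c}\|v\|}\right).
\end{equation}
Since $v/\|v\|=w/\|w\|$ (the conformal rescaling is a positive scalar), this is exactly the claimed formula for $\exp_x^c$.

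For the logarithmic map, I would use that $\log_x^c=(\exp_x^c)^{-1}$ on the relevant domain (valid since $\D^n_c$ is geodesically complete, as noted in the Riemannian preliminaries, and $\exp_x^c$ is a diffeomorphism here). Given $y\neq x$, I want $w$ with $\exp_x^c(w)=y$, i.e. $x\oplus_c\big(\tanh(\sqrt{c}\,\lambda_x^c\|w\|/2)\,\frac{w}{\sqrt{c}\|w\|}\big)=y$. Applying the left-cancellation law $(-x)\oplus_c(x\oplus_c z)=z$ from Section~\ref{sec:gyro_sec} with $z=\tanh(\sqrt{c}\,\lambda_x^c\|w\|/2)\,\frac{w}{\sqrt{c}\|w\|}$, I get $z=-x\oplus_c y$. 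Taking Euclidean norms, $\tanh(\sqrt{c}\,\lambda_x^c\|w\|/2)/\sqrt{c}=\|-x\oplus_c y\|$, which I invert using $\tanh^{-1}$ to solve for $\|w\|$: $\lambda_x^c\|w\|/2=\sqrt{c}^{-1}\tanh^{-1}(\sqrt{c}\|-x\oplus_c y\|)$, i.e. $\|w\|=\frac{2}{\sqrt{c}\,\lambda_x^c}\tanh^{-1}(\sqrt{c}\|-x\oplus_c y\|)$. Finally, since $z$ is a positive multiple of $w$, the direction is $w/\|w\|=z/\|z\|=(-x\oplus_c y)/\|-x\oplus_c y\|$, and multiplying the magnitude by this unit direction yields precisely the stated expression for $\log_x^c(y)$.

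Two small points deserve care. One must check the well-definedness of the normalization, i.e. that $\|w\|\neq 0$ exactly when $y\neq x$ (clear, since $-x\oplus_c y=\mathbf 0 \iff y=x$ by the cancellation properties of $\oplus_c$) and that the argument $\sqrt{c}\|-x\oplus_c y\|$ stays in $(0,1)$ so that $\tanh^{-1}$ is defined — this follows because $-x\oplus_c y\in\D^n_c$, so $c\|-x\oplus_c y\|^2<1$. The other is simply consistency with the distance formula: plugging the magnitude of $\log_x^c(y)$ into $\sqrt{g_x^c(\log_x^c y,\log_x^c y)}=\lambda_x^c\|\log_x^c y\|=\frac{2}{\sqrt c}\tanh^{-1}(\sqrt c\|-x\oplus_c y\|)$ recovers $d_c(x,y)$ of Eq.~(\ref{eq:mobius_dist}), as it must. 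I do not expect any genuine obstacle here; the only mild subtlety is bookkeeping the conformal factor $\lambda_x^c$ when passing between the Riemannian norm $\sqrt{g_x^c(\cdot,\cdot)}$ and the Euclidean norm $\|\cdot\|$ in the direction-normalization step, since that is where the factor $\lambda_x^c$ enters the final formulas.
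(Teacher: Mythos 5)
Your proposal is correct and follows essentially the same route as the paper: the exponential map is obtained by rescaling the unit-speed geodesic of the preceding lemma (the paper writes this as $\exp_x^c(v)=\gamma_{x,\,v/(\lambda_x^c\|v\|)}(\lambda_x^c\|v\|)$), and the logarithmic map is then verified as its inverse by an algebraic check, which you carry out explicitly via the left-cancellation law. Your added remarks on well-definedness and consistency with the distance formula are sound but not a departure from the paper's argument.
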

\vspace{-0.5cm}
\begin{proof}
Following the proof of \citep[Cor. 1.1]{ganea2018hyperbolic}, one gets $\exp_x^c(v) = \gamma_{x, \frac{v}{\lambda_x^c \|v\|}(\lambda_x^c \|v\|)}$. Using Eq.~(\ref{eq:gyro_unitspeed_geodesic}) gives the formula for $\exp_x^c$. Algebraic check of the identity $\log^c_x(\exp_x^c(v)) = v$ concludes.
\end{proof}

The above maps have more appealing forms when $x = \textbf{0}$, namely for $v\in T_\textbf{0}\D^n_c\setminus\{\textbf{0}\},\ y\in\D^n_c\setminus\{\textbf{0}\}$:

\begin{align}
\exp_\textbf{0}^c(v) = \tanh(\sqrt{c}\|v\|) \frac{v}{\sqrt{c}\|v\|},\ \log^c_\textbf{0}(y) = \tanh^{-1}(\sqrt{c}\|y\|) \frac{y}{\sqrt{c}\|y\|}.
\end{align}

Moreover, we still recover Euclidean geometry in the limit $c\to 0$, as $\lim_{c\to 0}\exp_x^c(v)=x+v$ is the Euclidean exponential map, and $\lim_{c\to 0}\log_x^c(y)=y-x$ is the Euclidean logarithmic map.

\paragraph{M\"obius scalar multiplication using exponential and logarithmic maps.}
We studied the exponential and logarithmic maps in order to gain a better understanding of the M\"obius scalar multiplication (Eq.~(\ref{eq:mobius_mult})). We found the following:
\begin{lemma}
The quantity $r\otimes x$ can actually be obtained by projecting $x$ in the tangent space at $\textbf{0}$ with the logarithmic map, multiplying this projection by the scalar $r$ in $T_\textbf{0}\D^n_c$, and then projecting it back on the manifold with the exponential map:
\begin{align}
r \otimes_c x = \exp_\textbf{0}^c(r \log_\textbf{0}^c(x)), \quad \forall r \in \Re, x \in \D^n_c.
\label{eq:scalar_otimes_exp}
\end{align}
In addition, we recover the well-known relation between geodesics connecting two points and the exponential map:
\begin{align}
\gamma_{x\rightarrow y}(t) = x \oplus_c (-x \oplus_c y) \otimes_c t = \exp_{x}^c(t \log_x^c(y)), \quad t \in [0,1].
\end{align}
\end{lemma}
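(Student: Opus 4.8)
The plan is to establish both identities by reducing everything to the two explicit formulas already proved in the previous lemmas: the closed form of $\exp_\mathbf{0}^c$ and $\log_\mathbf{0}^c$ for the first identity, and the closed form of $\exp_x^c$ together with the unit-speed geodesic formula for the second. The common thread is that M\"obius scalar multiplication is precisely what one gets when the radial $\tanh$/$\tanh^{-1}$ profile of the exponential and logarithmic maps composes with a linear rescaling of the tangent vector.

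For the first identity, I would compute $\exp_\mathbf{0}^c(r\log_\mathbf{0}^c(x))$ directly. Starting from $\log_\mathbf{0}^c(x) = \tanh^{-1}(\sqrt{c}\|x\|)\,x/(\sqrt{c}\|x\|)$, scaling by $r$ gives a tangent vector whose norm is $|r|\tanh^{-1}(\sqrt{c}\|x\|)/\sqrt{c}$ and whose direction is $\pm x/\|x\|$. Plugging this into $\exp_\mathbf{0}^c(v) = \tanh(\sqrt{c}\|v\|)\,v/(\sqrt{c}\|v\|)$, the factors of $\sqrt{c}$ and the sign of $r$ cancel in the direction term (note $v/\|v\| = \mathrm{sign}(r)\,x/\|x\|$ while $\tanh(\sqrt{c}\|v\|)$ carries a compensating sign since $\tanh$ is odd), and one is left with exactly $(1/\sqrt{c})\tanh(r\tanh^{-1}(\sqrt{c}\|x\|))\,x/\|x\|$, which is the definition of $r\otimes_c x$ in Eq.~(\ref{eq:mobius_mult}). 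The cases $x=\mathbf{0}$ and $r=0$ are handled trivially since both sides vanish by the stated conventions.

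For the second identity, the equality $\gamma_{x\to y}(t) = x\oplus_c(-x\oplus_c y)\otimes_c t$ is just Eq.~(\ref{eq:geodesic_2pts}), so the content is $\gamma_{x\to y}(t) = \exp_x^c(t\log_x^c(y))$. I would argue that $t\mapsto\exp_x^c(t\log_x^c(y))$ is, by the very definition of the exponential map parametrizing geodesics (used in the proof of the logarithmic map lemma), the unique geodesic with $\gamma(0)=x$ and initial velocity $\log_x^c(y)$; since $\exp_x^c(\log_x^c(y)) = y$ (the identity checked algebraically in the previous lemma), this geodesic passes through $y$ at $t=1$. By uniqueness of geodesics in the geodesically complete Poincar\'e ball it must coincide with $\gamma_{x\to y}$. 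Alternatively, one can simply substitute $\log_x^c(y) = (2/(\sqrt{c}\lambda_x^c))\tanh^{-1}(\sqrt{c}\|{-x}\oplus_c y\|)\,(-x\oplus_c y)/\|{-x}\oplus_c y\|$ into the closed form of $\exp_x^c$ from Eq.~(\ref{eq:gyro_exp_map}): the $\lambda_x^c$ factors cancel, the $\tanh(\tanh^{-1}(\cdot))$ composition simplifies, and after matching with the definition of M\"obius scalar multiplication applied to $-x\oplus_c y$, one recovers $x\oplus_c\bigl((-x\oplus_c y)\otimes_c t\bigr)$.

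The only mildly delicate point — and the place I would be most careful — is bookkeeping the sign of $r$ (or of $t$) together with the oddness of $\tanh$ and $\tanh^{-1}$ so that the direction vectors $v/\|v\|$ and the scalar prefactors recombine correctly; everything else is a routine cancellation of $\sqrt{c}$ and $\lambda_x^c$ factors. No genuine obstacle is expected, since both claims are essentially algebraic identities between already-established closed forms.
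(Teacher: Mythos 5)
Your proof is correct and takes the natural route — direct substitution of the closed forms of $\exp^c_\mathbf{0}$, $\log^c_\mathbf{0}$, $\exp^c_x$, $\log^c_x$ into the definition of $\otimes_c$ — which is exactly the verification the paper implicitly relies on (it states this lemma without a written proof). Your sign bookkeeping via the oddness of $\tanh$ and the cancellation of the $\sqrt{c}$ and $\lambda_x^c$ factors are the only nontrivial steps, and you handle them correctly.
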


This last result enables us to generalize scalar multiplication in order to define matrix-vector multiplication between Poincar\'e balls, one of the essential building blocks of hyperbolic neural networks. 

\paragraph{Parallel transport.} Finally, we connect parallel transport (from $T_\textbf{0}\D_c^n$) to gyrovector spaces with the following theorem, which we prove in appendix~\ref{sec:parallel_transport}.
\begin{theorem}\label{thm:parallel_transport}
In the manifold $(\D_c^n,g^c)$, the parallel transport w.r.t. the Levi-Civita connection of a vector $v\in T_\textbf{0}\D_c^n$ to another tangent space $T_x\D_c^n$ is given by the following isometry:
\begin{equation}
P^c_{\textbf{0}\to x}(v)=\log^c_x(x\oplus_c\exp_\textbf{0}^c(v))=\dfrac{\lambda_\textbf{0}^c}{\lambda_x^c}v.
\end{equation}
\end{theorem}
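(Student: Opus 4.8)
The plan is to establish the two claimed equalities separately: first the \emph{formula} $P^c_{\textbf{0}\to x}(v)=\log^c_x(x\oplus_c\exp_\textbf{0}^c(v))$ as a candidate for the transport map, and then verify that this candidate is indeed the parallel transport associated with the Levi-Civita connection of $g^c$, i.e.\ that it coincides with the identity $\tfrac{\lambda_\textbf{0}^c}{\lambda_x^c}v$.

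For the first part, I would start from the closed forms in Eq.~(\ref{eq:gyro_exp_map}). Plugging in, $\exp_\textbf{0}^c(v)$ lies along the ray through $v$, and then $x\oplus_c\exp_\textbf{0}^c(v)$ is a point of $\D^n_c$ whose M\"obius-difference from $x$ is $-x\oplus_c(x\oplus_c\exp_\textbf{0}^c(v))=\exp_\textbf{0}^c(v)$ by the left-cancellation law stated after Eq.~(\ref{eq:mobius_add}). Feeding this into the formula for $\log^c_x$ collapses the composition: the $\tanh$ and $\tanh^{-1}$ cancel, the direction $-x\oplus_c y/\|-x\oplus_c y\|$ becomes $v/\|v\|$ (since $\exp_\textbf{0}^c(v)$ is a positive multiple of $v$), and one is left with $\tfrac{2}{\sqrt c\,\lambda^c_x}\tanh^{-1}(\sqrt c\|\exp_\textbf{0}^c(v)\|)\tfrac{v}{\|v\|}$. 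Since $\|\exp_\textbf{0}^c(v)\|=\tfrac{1}{\sqrt c}\tanh(\sqrt c\,\lambda_\textbf{0}^c\|v\|/2)$, the $\tanh^{-1}(\tanh(\cdot))$ cancels and leaves $\tfrac{2}{\sqrt c\,\lambda_x^c}\cdot\tfrac{\sqrt c\,\lambda_\textbf{0}^c\|v\|}{2}\cdot\tfrac{v}{\|v\|}=\tfrac{\lambda_\textbf{0}^c}{\lambda_x^c}v$. This is a routine algebraic simplification.

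The substantive part is showing that the linear map $v\mapsto\tfrac{\lambda_\textbf{0}^c}{\lambda_x^c}v$ really is parallel transport along the geodesic from $\textbf{0}$ to $x$. Here I would invoke that $(\D^n_c,g^c)$ is conformal to Euclidean space with conformal factor $\lambda^c$, and use the standard formula for the Christoffel symbols of a conformally flat metric $g^c=(\lambda^c)^2 g^E$: writing $\lambda^c=e^{\rho}$ with $\rho=\log\lambda^c$, one has $\Gamma^k_{ij}=\delta_{ik}\partial_j\rho+\delta_{jk}\partial_i\rho-\delta_{ij}\partial_k\rho$. Then I would parametrize the geodesic $\gamma(t)=\gamma_{\textbf{0}\to x}(t)=(-\textbf{0}\oplus_c x)\otimes_c t=(t\otimes_c \text{-scaled})$, which by scalar-associativity lies along the Euclidean ray $\{s x/\|x\|: s\ge 0\}$, and reduce to a one-dimensional ODE along that ray: by symmetry the transported vector decomposes into its component along $\gamma$ and its component orthogonal to $\gamma$, and the parallel-transport equation $\dot V^k+\Gamma^k_{ij}\dot\gamma^i V^j=0$ becomes, in each case, a scalar linear ODE in $t$ whose coefficient is a derivative of $\rho$ along the ray. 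Solving it shows the norm scales like $1/\lambda^c$ while the Euclidean direction is preserved — exactly $V(t)=\tfrac{\lambda^c_{\gamma(0)}}{\lambda^c_{\gamma(t)}}V(0)$, and specializing to $\gamma(0)=\textbf{0}$, $\gamma(1)=x$ gives the claim. That this map is an isometry $T_\textbf{0}\D^n_c\to T_x\D^n_c$ is then immediate: $g^c_x(P v,Pw)=(\lambda^c_x)^2\langle Pv,Pw\rangle=(\lambda^c_x)^2(\lambda^c_\textbf{0}/\lambda^c_x)^2\langle v,w\rangle=(\lambda^c_\textbf{0})^2\langle v,w\rangle=g^c_\textbf{0}(v,w)$.

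The main obstacle I anticipate is the ODE bookkeeping in the conformal-Christoffel computation: one must carefully split the transported vector into radial and tangential-to-the-sphere parts relative to the ray through $x$, track how $\partial_i\rho$ contracts against the (radial) velocity $\dot\gamma$, and integrate. An alternative that sidesteps the Christoffel symbols entirely — and which I would fall back on if the direct computation gets unwieldy — is to use uniqueness of parallel transport together with a known explicit isometry of the Poincar\'e ball: the M\"obius translation $z\mapsto x\oplus_c z$ is an isometry of $(\D^n_c,g^c)$ sending $\textbf{0}$ to $x$ and the geodesic $\gamma_{\textbf{0}\to x}$ to itself, so its differential at $\textbf{0}$ maps $T_\textbf{0}\D^n_c$ to $T_x\D^n_c$ isometrically and commutes with the Levi-Civita connection; computing $d(x\oplus_c\,\cdot)|_\textbf{0}$ directly from Eq.~(\ref{eq:mobius_add}) gives a scalar multiple of the identity (by rotational symmetry about the axis $x$ it suffices to evaluate on vectors parallel and orthogonal to $x$), and matching the scalar against the norm-scaling forced by the isometry property yields $\lambda^c_\textbf{0}/\lambda^c_x$. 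Either route closes the proof.
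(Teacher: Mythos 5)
Your primary route---checking the algebraic identity $\log^c_x(x\oplus_c\exp_\textbf{0}^c(v))=(\lambda_\textbf{0}^c/\lambda_x^c)v$ via left-cancellation and the closed forms of Eq.~(\ref{eq:gyro_exp_map}), then establishing that $v\mapsto(\lambda_\textbf{0}^c/\lambda_x^c)v$ is parallel along the radial geodesic using the Christoffel symbols of the conformal metric---is essentially the paper's own proof; the only cosmetic difference is that the paper plugs the candidate field $X(t)=(\lambda^c_\textbf{0}/\lambda^c_{\gamma(t)})v$ into the covariant-derivative equation and verifies it vanishes, rather than solving the transport ODE from scratch. One caveat on your fallback: the differential at $\textbf{0}$ of an isometry carrying $\textbf{0}$ to $x$ and preserving the geodesic need not equal parallel transport (compose $x\oplus_c\cdot$ with a reflection fixing that geodesic), so that shortcut would require an additional equivariance or one-parameter-group argument to close.
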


As we'll see later, this result is crucial in order to define and optimize parameters shared between different tangent spaces, such as biases in hyperbolic neural layers or parameters of hyperbolic MLR.




\section{Hyperbolic Neural Networks}

Neural networks can be seen as being made of compositions of basic operations, such as linear maps, bias translations, pointwise non-linearities and a final sigmoid or softmax layer. We first explain how to construct a softmax layer for logits lying in a Poincar\'e ball. Then, we explain how to transform a mapping between two Euclidean spaces as one between Poincar\'e balls, yielding matrix-vector multiplication and pointwise non-linearities in the Poincar\'e ball. Finally, we present possible adaptations of various recurrent neural networks to the hyperbolic domain. 


\subsection{Hyperbolic multiclass logistic regression}\label{sec:hyp_mlr}
In order to perform multi-class classification on the Poincar\'e ball, one needs to generalize multinomial logistic regression (MLR) $-$ also called softmax regression $-$ to the Poincar\'e ball.

\paragraph{Reformulating Euclidean MLR.} Let's first reformulate Euclidean MLR from the perspective of distances to margin hyperplanes, as in \cite[Section 5]{lebanon2004hyperplane}. This will allow us to easily generalize it. 

Given $K$ classes, one learns a margin hyperplane for each such class using softmax probabilities:
\begin{align}
\forall k\in\{1,...,K\},\ \quad p(y = k | x) \propto \exp\left(\left(\langle a_k,x\rangle - b_k\right) \right), \quad \text{where\ }  b_k \in \Re,\ x, a_k \in \Re^n.
\label{eq:standard_mlr}
\end{align}

Note that any affine hyperplane in $\Re^n$ can be written with a normal vector $a$ and a scalar shift $b$:
\begin{align}
H_{a,b} = \{ x \in \Re^n : \langle a,x\rangle - b = 0\}, \quad \text{where\ } a \in \Re^n \setminus \{\textbf{0}\},\ \text{and}\ b \in \Re.
\label{eq:eucl_hyperplane}
\end{align}
As in \cite[Section 5]{lebanon2004hyperplane}, we note that $\langle a,x\rangle - b = \text{sign}(\langle a,x\rangle - b)\Vert a\Vert d(x,H_{a,b})$. Using Eq.~(\ref{eq:standard_mlr}):
\begin{align}
p(y = k | x) \propto \exp(\text{sign}(\langle a_k,x\rangle - b_k) \Vert a_k\Vert  d(x,H_{a_k,b_k})),\ b_k \in \Re, x, a_k \in \Re^n.
\label{eq:final_mlr}
\end{align}

As it is not immediately obvious how to generalize the Euclidean hyperplane of Eq.~(\ref{eq:eucl_hyperplane}) to other spaces such as the Poincar\'e ball, we reformulate it as follows:
\begin{align}
\tilde{H}_{a,p} = \{x \in \Re^n : \inp{-p + x}{a} = 0 \} = p + \{a\}^{\perp},\ \text{where\ } p\in \Re^n,\ a\in\Re^n \setminus \{\textbf{0}\}.
\label{eq:eucl_hyp_def}
\end{align}
This new definition relates to the previous one as $\tilde{H}_{a,p}=H_{a,\langle a,p\rangle}$. Rewriting Eq.~(\ref{eq:final_mlr}) with $b=\langle a,p\rangle$:
\begin{align}
 p(y = k | x) \propto \exp(\text{sign}(\langle -p_k+x,a_k\rangle) \Vert a_k\Vert  d(x,\tilde{H}_{a_k,p_k})),\ \text{with}\ p_k , x, a_k \in \Re^n.
\label{eq:final_mlr_bis}
\end{align}
 
It is now natural to adapt the previous definition to the hyperbolic setting by replacing $+$ by $\oplus_c$:
\begin{definition}[Poincar\'e hyperplanes]
For $p\in\D_c^n,\ a\in T_p\D_c^n\setminus\{\textbf{0}\}$, let $\{a\}^{\perp} := \{z \in T_p\D_c^n: g^c_p(z,a) = 0 \}=\{z \in T_p\D_c^n: \langle z,a\rangle = 0 \}$. Then, we define Poincar\'e hyperplanes as
\begin{align}
\tilde{H}_{a,p}^{c} := \{x \in \D^n_c : \inp{\log_p^c(x)}{a}_p = 0 \} = \exp^c_{p}(\{a\}^{\perp})=\{x \in \D^n_c : \inp{-p\oplus_c x}{a} = 0 \}.
\label{eq:poincare_hyp_def}
\end{align}
\end{definition}

The last equality is shown appendix~\ref{sec:poinc_hyp_defs_lemma_proof}. $\tilde{H}_{a,p}^{c}$ can also be described as the union of images of all geodesics in $\D^n_c$ orthogonal to $a$ and containing $p$. Notice that our definition matches that of \textit{hypergyroplanes}, see \cite[definition 5.8]{ungar2014analytic}. A 3D hyperplane example is depicted in Fig.~\ref{fig:hyp}. 

Next, we need the following theorem, proved in appendix~\ref{sec:hyp_dist_lemma_proof}:

\begin{theorem}
\label{thm:hyp_dist_lemma_proof}
\begin{align}
d_c(x,\tilde{H}_{a,p}^{c}) := \inf_{w \in \tilde{H}_{a,p}^{c}} d_c(x,w) = \dfrac{1}{\sqrt{c}}\sinh^{-1}\left( \frac{2\sqrt{c}|\inp{-p\oplus_c x}{a}|}{(1 -c \| -p\oplus_c x \|^2)\Vert a\Vert} \right).
\end{align}
\end{theorem}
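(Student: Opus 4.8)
The plan is to reduce to the case $p=\textbf{0}$ by an isometry, then to a two-dimensional problem by a symmetry argument, and finally to read off the answer from the hyperbolic law of sines in a right triangle.

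\emph{Step 1 (reduction to $p=\textbf{0}$).} Use the standard fact that the left M\"obius translation $\psi:z\mapsto -p\oplus_c z$ is an isometry of $(\D^n_c,g^c)$; it is a bijection sending $p$ to $\textbf{0}$. By the Euclidean-inner-product description in the definition, $\tilde{H}^c_{a,p}=\{x:\inp{-p\oplus_c x}{a}=0\}$, so $\psi$ maps $\tilde{H}^c_{a,p}$ onto $\tilde{H}^c_{a,\textbf{0}}=\{z\in\D^n_c:\inp{z}{a}=0\}$. Since isometries preserve distances to sets, $d_c(x,\tilde{H}^c_{a,p})=d_c(-p\oplus_c x,\ \tilde{H}^c_{a,\textbf{0}})$, and it suffices to prove $d_c(x,\tilde{H}^c_{a,\textbf{0}})=\tfrac{1}{\sqrt c}\sinh^{-1}\!\big(\tfrac{2\sqrt c\,|\inp{x}{a}|}{(1-c\|x\|^2)\|a\|}\big)$; replacing $x$ by $-p\oplus_c x$ at the end yields the general statement.

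\emph{Step 2 (reduction to a $2$-disk).} Take $p=\textbf{0}$ and, after a rotation, $a=\|a\|e_1$, so $\tilde{H}^c_{a,\textbf{0}}=\{z_1=0\}$; write $x=x_1e_1+\bar x$ with $\bar x\perp e_1$ (if $\bar x=\textbf{0}$ the problem is one-dimensional, $w^\ast=\textbf{0}$, and one checks the formula directly). Let $V=\mathrm{span}(e_1,\bar x)$. The explicit formulas for $\oplus_c$ and $\otimes_c$ keep vectors of $V$ inside $V$, so by Eq.~(\ref{eq:geodesic_2pts}) geodesics between points of $V\cap\D^n_c$ stay in $V$: it is a totally geodesic Poincar\'e $2$-disk. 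As $(\D^n_c,g^c)$ is a Hadamard manifold and $\tilde{H}^c_{a,\textbf{0}}$ is complete and totally geodesic, hence convex, the nearest point $w^\ast\in\tilde{H}^c_{a,\textbf{0}}$ to $x$ exists, is unique, and $\gamma_{x\to w^\ast}$ meets $\tilde{H}^c_{a,\textbf{0}}$ orthogonally. The orthogonal map $\sigma$ negating all coordinates outside $V$ is an isometry of $\D^n_c$ (its conformal factor depends only on $\|\cdot\|$), fixes $x$, and preserves $\tilde{H}^c_{a,\textbf{0}}$ setwise; uniqueness forces $\sigma(w^\ast)=w^\ast$, i.e.\ $w^\ast\in V$. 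So everything happens in the $2$-disk $V\cap\D^n_c$, in which $\tilde{H}^c_{a,\textbf{0}}\cap V$ is the diameter along $\bar x$.

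\emph{Step 3 (trigonometry).} In $V\cap\D^n_c$, the geodesic triangle $\textbf{0},x,w^\ast$ has a right angle at $w^\ast$; the geodesic from $\textbf{0}$ to $x$ is the Euclidean segment $[\textbf{0},x]$ (radii are geodesics, by Eqs.~(\ref{eq:geodesic_2pts}) and~(\ref{eq:mobius_mult})) of hyperbolic length $d_c(\textbf{0},x)=\tfrac{2}{\sqrt c}\tanh^{-1}(\sqrt c\|x\|)$; and the angle $\alpha$ at $\textbf{0}$ between $[\textbf{0},x]$ and the diameter $\mathbb R\bar x$ equals the Euclidean angle, since $g^c$ is conformal (cf.\ Eq.~(\ref{eq:hyp_angle_tangent_space})), so $\sin\alpha=|x_1|/\|x\|=|\inp{x}{a}|/(\|x\|\|a\|)$. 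The hyperbolic law of sines for a right triangle in curvature $-c$ (Appendix~\ref{sec:hyp_trig}), applied to the side $w^\ast x$ opposite $\alpha$, gives $\sinh(\sqrt c\, d_c(x,w^\ast))=\sin\alpha\,\sinh(\sqrt c\, d_c(\textbf{0},x))$. Using $\sinh(2\tanh^{-1}u)=2u/(1-u^2)$ we get $\sinh(\sqrt c\, d_c(\textbf{0},x))=2\sqrt c\|x\|/(1-c\|x\|^2)$; plugging in and using $d_c(x,\tilde{H}^c_{a,\textbf{0}})=d_c(x,w^\ast)$ produces the desired formula at $p=\textbf{0}$, and Step~1 completes the proof.

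\emph{Main obstacle.} The algebra is trivial once the geometry is set up; all the difficulty is in Step~2 — showing the nearest point exists, is unique, lies in the $2$-plane $V$, and that $\gamma_{x\to w^\ast}$ is orthogonal to the hyperplane, so that the triangle really is right-angled at $w^\ast$ with the angle at $\textbf{0}$ equal to the plain Euclidean angle. This can also be done by brute force: with $p=\textbf{0}$, $a=e_1$, minimize $\cosh(\sqrt c\, d_c(x,w))=1+2c\|x-w\|^2/[(1-c\|x\|^2)(1-c\|w\|^2)]$ over $w$ with $w_1=0$; stationarity forces $\bar w\parallel\bar x$, after which one solves a quadratic in the scaling parameter and substitutes back — precisely the routine computation the trigonometric route sidesteps. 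One should also confirm that the version of the law of sines in the appendix is stated for the generalized ball $(\D^n_c,g^c)$, as the paper announces.
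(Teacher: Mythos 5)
Your proof is correct, and it reaches the same final triangle and the same sine-law computation as the paper, but the scaffolding is genuinely different. The paper works with general $p$ throughout and builds the geometric justification from scratch inside gyro-trigonometry: it proves existence and uniqueness of the orthogonal projection onto a geodesic by a continuity argument on the gyroangle and by the no-two-right-angles property (Lemma~\ref{lemma:proj_geodesic}), shows that this projection minimizes distance (Lemma~\ref{lemma:min_dist_pt_geodesic}), restricts attention to $w$ with $\angle xwp=\pi/2$, and then converts the remaining minimization of $\sin(\angle xpw)$ over $w\in\tilde{H}_{a,p}^{c}$ into a Euclidean angle-to-hyperplane problem via the gyroangle formula and the identity $\{\log_p^c(w):w\in\tilde{H}_{a,p}^{c}\}=\{a\}^\perp$. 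You instead normalize first (left gyrotranslation $z\mapsto -p\oplus_c z$ carrying $\tilde{H}_{a,p}^{c}$ to the linear hyperplane $\{\inp{z}{a}=0\}$, then a rotation and a reflection to land in a totally geodesic $2$-disk), and you import the Hadamard-manifold nearest-point package to get existence, uniqueness and orthogonality of the foot $w^\ast$ in one stroke. Your route is shorter and cleaner once one grants two external standard facts — that gyrotranslations are isometries of $(\D_c^n,g^c)$ (Ungar's gyrotranslation theorem; the paper never states it) and the convexity/projection theorem for Hadamard manifolds — whereas the paper's route is self-contained modulo the hyperbolic law of sines and handles general $p$ without ever invoking the isometry group. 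Two small points worth making explicit in your write-up: the degenerate configuration $w^\ast=\textbf{0}$ only occurs in the one-dimensional case you already set aside, and the sine at the origin is insensitive to which side of the diameter $w^\ast$ falls on, so the law-of-sines step is unambiguous. The appendix's law of sines (Eq.~(\ref{eq:sine_law})) is indeed stated for the generalized ball with the $\sqrt{c}$ factors, as you require.
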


\paragraph{Final formula for MLR in the Poincar\'e ball.} Putting together Eq.~(\ref{eq:final_mlr_bis}) and Thm.~\ref{thm:hyp_dist_lemma_proof}, we get the hyperbolic MLR formulation. Given $K$ classes and $k\in\{1,\ldots, K\},\ p_k \in \D^n_c,\ a_k\in T_{p_k}\D^n_c\setminus\{\textbf{0}\}$:
\begin{align}
p(y = k | x) \propto \exp(\text{sign}(\inp{-p_k\oplus_c x}{a_k})\sqrt{g^c_{p_k}(a_k,a_k)}d_c(x,\tilde{H}^{c}_{a_k,p_k})), \quad \forall x \in \D^n_c,
\end{align}
or, equivalently
\begin{align}
p(y = k | x) \propto \exp\left(\dfrac{\lambda^c_{p_k}\Vert a_k\Vert}{\sqrt{c}} \sinh^{-1}\left( \frac{2\sqrt{c}\inp{-p_k \oplus_c x}{a_k}}{(1 - c\| -p_k \oplus_c x \|^2)\Vert a_k\Vert} \right) \right), \quad \forall x \in \D^n_c.
\label{eq:hyperbolic_mlr}
\end{align}
Notice that when $c$ goes to zero, this goes to $p(y = k | x) \propto \exp(4\inp{-p_k + x}{a_k})=\exp((\lambda^0_{p_k})^2\inp{-p_k + x}{a_k})=\exp(\inp{-p_k + x}{a_k}_0)$, recovering the usual Euclidean softmax. 

However, at this point it is unclear how to perform optimization over $a_k$, since it lives in $T_{p_k}\D_c^n$ and hence depends on $p_k$. The solution is that one should write $a_k=P^c_{\textbf{0}\to p_k}(a'_k)=(\lambda_\textbf{0}^c/\lambda_{p_k}^c)a'_k$, where $a'_k\in T_\textbf{0}\D_c^n=\Re^n$, and optimize $a'_k$ as a Euclidean parameter. 

\newpage
\subsection{Hyperbolic feed-forward layers}\label{sec:hyp_ffl}

\begin{wrapfigure}{r}{0.35\textwidth}
  \vspace{-55pt}
  \begin{center}
    \includegraphics[width=0.35\textwidth]{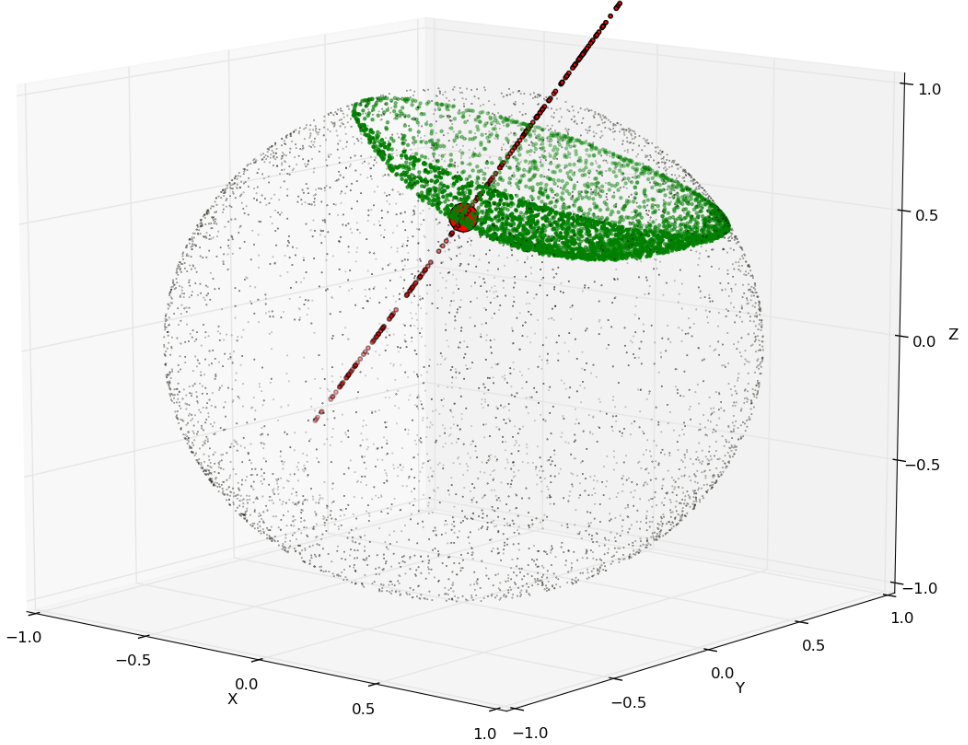}
  \end{center}
  \vspace{-10pt}
  \caption{\textit{An example of a hyperbolic hyperplane in $\D_1^3$ plotted using sampling. The red point is $p$. The shown normal axis to the hyperplane through $p$  is parallel to $a$.}}
  \label{fig:hyp}
  \vspace{-0pt}
\end{wrapfigure}

In order to define hyperbolic neural networks, it is crucial to define a canonically simple parametric family of transformations, playing the role of linear mappings in usual Euclidean neural networks, and to know how to apply pointwise non-linearities. Inspiring ourselves from our reformulation of M\"obius scalar multiplication in Eq.~(\ref{eq:scalar_otimes_exp}), we define:

\begin{definition}[M\"obius version]
For $f:\Re^n\to\Re^m$, we define the \textit{M\"obius version of $f$} as the map from $\D_c^n$ to $\D_c^m$ by:
\begin{equation}\label{eq:mobius_version}
f^{\otimes_c}(x) :=\exp_\textbf{0}^c(f(\log_\textbf{0}^c(x))),
\end{equation}
where $\exp_\textbf{0}^c:T_{\textbf{0}_m}\D_c^m\to \D_c^m$ and $\log_\textbf{0}^c:\D_c^n\to T_{\textbf{0}_n}\D_c^n$.
\end{definition}
Note that similarly as for other M\"obius operations, we recover the Euclidean mapping in the limit $c\to 0$ if $f$ is continuous, as $\lim_{c\to 0}f^{\otimes_c}(x)=f(x)$. This definition satisfies a few desirable properties too, such as: $(f\circ g)^{\otimes_c} = f^{\otimes_c}\circ g^{\otimes_c}$ for $f:\Re^m\to\Re^l$ and $g:\Re^n\to\Re^m$ (morphism property),  and $f^{\otimes_c}(x)/\Vert f^{\otimes_c}(x)\Vert=f(x)/\Vert f(x)\Vert$ for $f(x)\neq\textbf{0}$ (direction preserving). It is then straight-forward to prove the following result:

\begin{lemma}[M\"obius matrix-vector multiplication]
If $M:\Re^n\to\Re^m$ is a linear map, which we identify with its matrix representation, then $ \forall x\in\D_c^n$, if  $Mx \neq \textbf{0}$ we have
\begin{equation}\label{eq:mobius_mat_mult}
M^{\otimes_c}(x)=(1/\sqrt{c}) \tanh\left(\dfrac{\Vert Mx\Vert}{\Vert x\Vert}\tanh^{-1}(\sqrt{c}\Vert x\Vert)\right)\dfrac{Mx}{\Vert Mx\Vert},
\end{equation}
and $M^{\otimes_c}(x)=\textbf{0}$ if $Mx=\textbf{0}$. Moreover, if we define the M\"obius matrix-vector multiplication of  $M\in\mathcal{M}_{m,n}(\Re)$ and $x\in\D_c^n$ by $M\otimes_c x:=M^{\otimes_c}(x)$, then we have $(MM')\otimes_c x=M\otimes_c(M'\otimes_c x)$ for $M\in\mathcal{M}_{l,m}(\mathbb{R})$ and $M'\in\mathcal{M}_{m,n}(\mathbb{R})$ (matrix associativity), $(rM)\otimes_c x = r \otimes_c (M \otimes_c x)$ for $r\in\Re$ and $M\in\mathcal{M}_{m,n}(\Re)$ (scalar-matrix associativity) and $M\otimes_c x=Mx$ for all $M\in\mathcal{O}_n(\mathbb{R})$ (rotations are preserved).
\end{lemma}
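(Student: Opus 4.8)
The plan is to compute $M^{\otimes_c}$ directly from its definition $f^{\otimes_c}(x)=\exp_\textbf{0}^c(f(\log_\textbf{0}^c(x)))$ using the closed forms of $\exp_\textbf{0}^c$ and $\log_\textbf{0}^c$ at the origin established earlier, and then to read off the four algebraic identities as easy consequences of that formula together with the identity $\log_\textbf{0}^c\circ\exp_\textbf{0}^c=\mathrm{id}$ and Eq.~(\ref{eq:scalar_otimes_exp}).

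First I would treat the generic case $x\neq\textbf{0}$ and $Mx\neq\textbf{0}$. Writing $u:=\log_\textbf{0}^c(x)=\tanh^{-1}(\sqrt{c}\|x\|)\,x/(\sqrt{c}\|x\|)$, linearity of $M$ gives $Mu=\tanh^{-1}(\sqrt{c}\|x\|)\,Mx/(\sqrt{c}\|x\|)$, hence $\|Mu\|=\tanh^{-1}(\sqrt{c}\|x\|)\,\|Mx\|/(\sqrt{c}\|x\|)$ and $Mu/\|Mu\|=Mx/\|Mx\|$. Substituting $v:=Mu$ into $\exp_\textbf{0}^c(v)=\tanh(\sqrt{c}\|v\|)\,v/(\sqrt{c}\|v\|)$ and simplifying the argument of $\tanh$ (the $\sqrt{c}\|x\|$ factors cancel) yields exactly Eq.~(\ref{eq:mobius_mat_mult}). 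The degenerate cases are immediate: if $x=\textbf{0}$ then $\log_\textbf{0}^c(\textbf{0})=\textbf{0}$, $M\textbf{0}=\textbf{0}$ and $\exp_\textbf{0}^c(\textbf{0})=\textbf{0}$; if $x\neq\textbf{0}$ but $Mx=\textbf{0}$ then $Mu=\textbf{0}$ and again $\exp_\textbf{0}^c(\textbf{0})=\textbf{0}$, so $M^{\otimes_c}(x)=\textbf{0}$ as claimed.

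For the algebraic properties: matrix associativity is the morphism property $(f\circ g)^{\otimes_c}=f^{\otimes_c}\circ g^{\otimes_c}$ specialized to $f=M$, $g=M'$ (so $f\circ g=MM'$), which itself follows from $\log_\textbf{0}^c\circ\exp_\textbf{0}^c=\mathrm{id}$. Scalar-matrix associativity follows in the same way: $(rM)\otimes_c x=\exp_\textbf{0}^c(rM\log_\textbf{0}^c(x))$, while $r\otimes_c(M\otimes_c x)=\exp_\textbf{0}^c\!\big(r\log_\textbf{0}^c(\exp_\textbf{0}^c(M\log_\textbf{0}^c(x)))\big)=\exp_\textbf{0}^c(rM\log_\textbf{0}^c(x))$ by the same cancellation together with $r\otimes_c y=\exp_\textbf{0}^c(r\log_\textbf{0}^c(y))$ from Eq.~(\ref{eq:scalar_otimes_exp}). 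Finally, for $M\in\mathcal{O}_n(\mathbb{R})$ we have $\|Mx\|=\|x\|$, so in Eq.~(\ref{eq:mobius_mat_mult}) the ratio $\|Mx\|/\|x\|$ equals $1$, the composition $\tanh(\tanh^{-1}(\cdot))$ collapses, and we are left with $(1/\sqrt{c})\,\sqrt{c}\|x\|\,Mx/\|Mx\|=Mx$ (the $x=\textbf{0}$ case being trivial). There is no deep obstacle here; the only care needed is the bookkeeping of the scalar factors $\sqrt{c}$, $\|x\|$, $\|Mx\|$ through the composition $\exp_\textbf{0}^c\circ M\circ\log_\textbf{0}^c$ and the separate handling of the three cases so that the direction vectors $Mx/\|Mx\|$ are well defined. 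One subtlety worth noting explicitly is that the morphism property used for matrix associativity is literally an equality of maps only where no intermediate image vanishes; on the remaining lower-dimensional set both sides equal $\textbf{0}$ by the degenerate-case computation above, so the identity holds on all of $\D_c^n$.
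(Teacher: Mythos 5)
Your proof is correct and follows exactly the route the paper intends: the paper declares this lemma ``straight-forward to prove'' from the definition $f^{\otimes_c}=\exp_\textbf{0}^c\circ f\circ\log_\textbf{0}^c$ and gives no further details, and your direct substitution of the closed forms of $\exp_\textbf{0}^c$ and $\log_\textbf{0}^c$, plus the morphism property and Eq.~(\ref{eq:scalar_otimes_exp}) for the algebraic identities, is precisely that computation. Your explicit handling of the degenerate cases and the remark about where intermediate images vanish is slightly more careful than the paper itself.
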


\paragraph{Pointwise non-linearity.} If $\varphi:\Re^n\to\Re^n$ is a pointwise non-linearity, then its M\"obius version $\varphi^{\otimes_c}$ can be applied to elements of the Poincar\'e ball.

\paragraph{Bias translation.} The generalization of a translation in the Poincar\'e ball is naturally given by moving along geodesics. But should we use the M\"obius sum $x\oplus_c b$ with a hyperbolic bias $b$ or the exponential map $\exp_x^c(b')$ with a Euclidean bias $b'$? These views are unified with parallel transport (see Thm~\ref{thm:parallel_transport}). M\"obius translation of a point $x\in\D_c^n$ by a bias $b\in\D_c^n$ is given by
\begin{equation}\label{eq:bias}
x\leftarrow x\oplus_c b=\exp_x^c(P^c_{\textbf{0}\to x}(\log_\textbf{0}^c(b)))=\exp_x^c\left(\dfrac{\lambda_\textbf{0}^c}{\lambda_x^c}\log_\textbf{0}^c(b)\right).
\end{equation}
We recover Euclidean translations in the limit $c\to 0$.
Note that bias translations play a particular role in this model. Indeed, consider multiple layers of the form $f_k(x)=\varphi_k(M_k x)$, each of which having M\"obius version $f_k^{\otimes_c}(x)=\varphi_k^{\otimes_c}(M_k\otimes_c x)$. Then their composition can be re-written $f_k^{\otimes_c}\circ\dots\circ f_1^{\otimes_c}=\exp_\textbf{0}^c\circ f_k \circ\dots\circ f_1\circ\log_\textbf{0}^c$. This means that these operations can essentially be performed in Euclidean space. Therefore, it is the interposition between those with the bias translation of Eq.~(\ref{eq:bias}) which differentiates this model from its Euclidean counterpart.

\paragraph{Concatenation of multiple input vectors.} If a vector $x\in\mathbb{R}^{n+p}$ is the (vertical) concatenation of two vectors $x_1\in\Re^n$, $x_2\in\Re^p$, and $M\in\mathcal{M}_{m,n+p}(\Re)$ can be written as the (horizontal) concatenation of two matrices $M_1\in\mathcal{M}_{m,n}(\Re)$ and $M_2\in\mathcal{M}_{m,p}(\Re)$, then $Mx=M_1x_1+M_2 x_2$. We generalize this to hyperbolic spaces: if we are given $x_1\in\D_c^n$, $x_2\in\D_c^p$, $x=(x_1\ x_2)^T\in\D_c^n\times\D_c^p$, and $M,M_1,M_2$ as before, then we define $M\otimes_c x:=M_1\otimes_c x_1\oplus_c M_2\otimes_c x_2$. Note that when $c$ goes to zero, we recover the Euclidean formulation, as $\lim_{c\to 0} M\otimes_c x=\lim_{c\to 0}M_1\otimes_c x_1\oplus_c M_2\otimes_c x_2=M_1x_1+M_2 x_2=Mx$. Moreover, hyperbolic vectors $x\in\D_c^n$ can also be "concatenated" with real features $y \in \Re$ by doing: $M \otimes_c x \oplus_c y \otimes_c b$ with learnable $b \in\D_c^m$ and $M\in\mathcal{M}_{m,n}(\Re)$.

\subsection{Hyperbolic RNN}
\paragraph{Naive RNN.} A simple RNN can be defined by $h_{t+1}=\varphi(W h_t+U x_t+b)$ where $\varphi$ is a pointwise non-linearity, typically $\tanh$, sigmoid, ReLU, etc. This formula can be naturally generalized to the hyperbolic space as follows. For parameters $W\in\mathcal{M}_{m,n}(\Re)$, $U\in\mathcal{M}_{m,d}(\Re)$, $b\in\D_c^m$, we define:
\begin{equation}\label{eq:naive_hyp_rnn}
h_{t+1}=\varphi^{\otimes_c}(W\otimes_c h_t\oplus_c U\otimes_c x_t\oplus_c b),\quad h_t\in\D_c^n,\ x_t\in\D_c^d.
\end{equation}
Note that if inputs $x_t$'s are Euclidean, one can write $\tilde{x}_t:=\exp_0^c(x_t)$ and use the above formula, since $\exp^c_{W\otimes_c h_t}(P^c_{\textbf{0}\to W\otimes_c h_t}(U x_t))=W\otimes_c h_t\oplus_c\exp^c_\textbf{0}(U x_t)=W\otimes_c h_t\oplus_c U\otimes_c \tilde{x}_t$.

\paragraph{GRU architecture.} One can also adapt the GRU architecture:
\begin{equation}\label{eq:GRU}
	\begin{aligned}
		r_t &=\sigma(W^{r} h_{t-1}+U^{r}x_t+b^r), & z_t &=\sigma(W^{z} h_{t-1}+U^{z}x_t+b^z),\\
		\tilde{h}_t &=\varphi(W (r_t\odot h_{t-1})+U x_t+b), & h_t &=(1-z_t)\odot h_{t-1}+z_t\odot\tilde{h}_t,
	\end{aligned}
\end{equation}
where $\odot$ denotes pointwise product. First, how should we adapt the pointwise multiplication by a scaling gate? Note that the definition of the M\"obius version (see Eq.~(\ref{eq:mobius_version})) can be naturally extended to maps $f:\Re^n\times\Re^p\to\Re^m$ as $f^{\otimes_c}:(h,h')\in\D^n_c\times\D^p_c\mapsto\exp_\textbf{0}^c(f(\log_\textbf{0}^c(h),\log_\textbf{0}^c(h')))$. In particular, choosing $f(h,h'):=\sigma(h)\odot h'$ yields\footnote{If $x$ has $n$ coordinates, then $\text{diag}(x)$ denotes the diagonal matrix of size $n$ with $x_i$'s on its diagonal.} $f^{\otimes_c}(h,h')=\exp_\textbf{0}^c(\sigma(\log_\textbf{0}^c(h))\odot\log_\textbf{0}^c(h'))=\text{diag}(\sigma(\log_\textbf{0}^c(h)))\otimes_c h'$. Hence we adapt $r_t\odot h_{t-1}$ to $\text{diag}(r_t)\otimes_c h_{t-1}$ and the reset gate $r_t$ to:
\begin{equation}\label{eq:hyp_scaling}
r_t =\sigma\log^c_\textbf{0}(W^{r}\otimes_c h_{t-1}\oplus_c U^{r}\otimes_c x_t\oplus_c b^r),
\end{equation}
and similarly for the update gate $z_t$. Note that as the argument of $\sigma$ in the above is unbounded, $r_t$ and $z_t$ can a priori take values onto the full range $(0,1)$. Now the intermediate hidden state becomes:
\begin{equation}
\tilde{h}_t =\varphi^{\otimes_c}((W\text{diag}(r_t))\otimes_c h_{t-1}\oplus_c U\otimes_c x_t\oplus b),
\end{equation}
where M\"obius matrix associativity simplifies $W\otimes_c(\text{diag}(r_t)\otimes_c h_{t-1})$ into $(W\text{diag}(r_t))\otimes_c h_{t-1}$. Finally, we propose to adapt the update-gate equation as
\begin{equation}\label{eq:gru_update}
h_t =h_{t-1}\oplus_c\text{diag}(z_t)\otimes_c(-h_{t-1}\oplus_c\tilde{h}_t).
\end{equation}
Note that when $c$ goes to zero, one recovers the usual GRU. Moreover, if $z_t=\textbf{0}$ or $z_t=\textbf{1}$, then $h_t$ becomes $h_{t-1}$ or $\tilde{h}_t$ respectively, similarly as in the usual GRU. This adaptation was obtained by adapting \cite{tallec2018can}: in this work, the authors re-derive the update-gate mechanism from a first principle called \textit{time-warping invariance}. We adapted their derivation to the hyperbolic setting by using the notion of \textit{gyroderivative} \cite{birman2001hyperbolic} and proving a \textit{gyro-chain-rule} (see appendix~\ref{sec:hyp_GRU}).

\section{Experiments}

\paragraph{SNLI task and dataset.} We evaluate our method on two tasks. The first is natural language inference, or textual entailment. Given two sentences, a premise (e.g. "Little kids A. and B. are playing soccer.") and a hypothesis (e.g. "Two children are playing outdoors."), the binary classification task is to predict whether the second sentence can be inferred from the first one. This defines a partial order in the sentence space. We test hyperbolic networks on the biggest real dataset for this task, SNLI~\citep{bowman2015large}. It consists of 570K training, 10K validation and 10K test sentence pairs. Following~\citep{vendrov2015order}, we merge the "contradiction" and "neutral" classes into a single class of negative sentence pairs, while the "entailment" class gives the positive pairs.

\paragraph{PREFIX task and datasets.} We conjecture that the improvements of hyperbolic neural networks are more significant when the underlying data structure is closer to a tree. To test this, we design a proof-of-concept task of \textit{detection of noisy prefixes}, i.e. given two sentences, one has to decide if the second sentence is a noisy prefix of the first, or a random sentence. We thus build synthetic datasets PREFIX-Z\% (for Z being 10, 30 or 50) as follows: for each random first sentence of random length at most 20 and one random prefix of it, a second positive sentence is generated by randomly replacing Z\% of the words of the prefix, and a second negative sentence of same length is randomly generated. Word vocabulary size is 100, and we generate 500K training, 10K validation and 10K test pairs. 

\paragraph{Models architecture.} Our neural network layers can be used in a plug-n-play manner exactly like standard Euclidean layers. They can also be combined with Euclidean layers. However, optimization w.r.t. hyperbolic parameters is different (see below) and based on Riemannian gradients which are just rescaled Euclidean gradients when working in the conformal Poincar\'e model~\citep{nickel2017poincar}. Thus, back-propagation can be applied in the standard way. 

In our setting, we embed the two sentences using two distinct hyperbolic RNNs or GRUs. The sentence embeddings are then fed together with their squared distance (hyperbolic or Euclidean, depending on their geometry) to a FFNN (Euclidean or hyperbolic, see Sec.~\ref{sec:hyp_ffl}) which is further fed to an MLR (Euclidean or hyperbolic, see Sec.~\ref{sec:hyp_mlr})  that gives probabilities of the two classes (entailment vs neutral). We use cross-entropy loss on top. Note that hyperbolic and Euclidean layers can be mixed, e.g. the full network can be hyperbolic and only the last layer be Euclidean, in which case one has to use $\log_\textbf{0}$ and $\exp_\textbf{0}$ functions to move between the two manifolds in a correct manner as explained for Eq.~\ref{eq:mobius_version}.

\paragraph{Optimization.} Our models have both Euclidean (e.g. weight matrices in both Euclidean and hyperbolic FFNNs, RNNs or GRUs) and hyperbolic parameters (e.g. word embeddings or biases for the hyperbolic layers). We optimize the Euclidean parameters with Adam~\citep{kingma2014adam} (learning rate 0.001). Hyperbolic parameters cannot be updated with an equivalent method that keeps track of gradient history due to the absence of a Riemannian Adam. Thus, they are optimized using full Riemannian stochastic gradient descent (RSGD)~\citep{bonnabel2013stochastic,ganea2018hyperbolic}. We also experiment with projected RSGD~\citep{nickel2017poincar}, but optimization was sometimes less stable. We use a different constant learning rate for word embeddings (0.1) and other hyperbolic weights (0.01) because words are updated less frequently. 

\paragraph{Numerical errors.} Gradients of the basic operations defined above (e.g. $\oplus_c$, exponential map) are not defined when the hyperbolic argument vectors are on the ball border, i.e. $\sqrt{c} \|x\| =1$. Thus, we always project results of these operations in the ball of radius $1 - \epsilon$, where $\epsilon = 10^{-5}$. Numerical errors also appear when hyperbolic vectors get closer to $\textbf{0}$, thus we perturb them with an $\epsilon' = 10^{-15}$ before they are used in any of the above operations. Finally, arguments of the $\tanh$ function are clipped between $\pm 15$ to avoid numerical errors, while arguments of $\tanh^{-1}$ are clipped to at most $1 - 10^{-5}$.

\paragraph{Hyperparameters.} For all methods, baselines and datasets, we use $c=1$, word and hidden state embedding dimension of 5 (we focus on the low dimensional setting that was shown to already be effective~\citep{nickel2017poincar}), batch size of 64. We ran all methods for a fixed number of $30$ epochs. For all models, we experiment with both \textit{identity} (no non-linearity) or $\tanh$ non-linearity in the RNN/GRU cell, as well as \textit{identity} or ReLU after the FFNN layer and before MLR. As expected, for the fully Euclidean models, $\tanh$ and ReLU respectively surpassed the \textit{identity} variant by a large margin. We only report the best Euclidean results. Interestingly, for the hyperbolic models, using only identity for both non-linearities works slightly better and this is likely due to two facts: i) our hyperbolic layers already contain non-linearities by their nature, ii) $\tanh$ is limiting the output domain of the sentence embeddings, but the hyperbolic specific geometry is more pronounced at the ball border, i.e. at the hyperbolic "infinity", compared to the center of the ball. 

For the results shown in Tab.~\ref{tab:results}, we run each model (baseline or ours) exactly 3 times and report the test result corresponding to the best validation result from these 3 runs. We do this because the highly non-convex spectrum of hyperbolic neural networks sometimes results in convergence to poor local minima, suggesting that initialization is very important.

\begin{table*}[t]
\begin{center}
\begin{small}
\begin{sc}
\begin{tabular}{| c || c | c | c | c |}
\toprule

                               &   SNLI  &  PREFIX-10\%  &   PREFIX-30\% &   PREFIX-50\% \\
\midrule
Fully Euclidean RNN            &    \textbf{79.34} \% &  89.62 \% & 81.71 \% & 72.10 \% \\ \hline
Hyperbolic RNN+FFNN, Eucl MLR  &   \textbf{79.18}  \% &  96.36  \% & \textbf{87.83} \% &  \textbf{76.50 } \% \\ \hline 
Fully Hyperbolic RNN           &    78.21 \% &   \textbf{96.91}    \% &   87.25  \% &   62.94 \%    \\ \hline \hline

Fully Euclidean GRU            &   \textbf{81.52} \% & 95.96 \% &  86.47 \% &   75.04    \% \\ \hline 
Hyperbolic GRU+FFNN, Eucl MLR  &   79.76 \% & \textbf{97.36}  \% & \textbf{88.47}  \% & \textbf{76.87}  \% \\ \hline 
Fully Hyperbolic GRU           &    \textbf{81.19}  \% &  \textbf{97.14} \% &  \textbf{88.26} \% &  \textbf{76.44}   \% \\ \hline

\end{tabular}
\end{sc}
\end{small}
\end{center}
\vskip -0.1in
\caption{Test accuracies for various models and four datasets. "Eucl" denotes Euclidean. All word and sentence embeddings have dimension 5. We highlight in \textbf{bold} the best baseline (or baselines, if the difference is less than 0.5\%).}
\label{tab:results}
\end{table*}

\paragraph{Results.} Results are shown in Tab.~\ref{tab:results}. Note that the fully Euclidean baseline models might have an advantage over hyperbolic baselines because more sophisticated optimization algorithms such as Adam do not have a hyperbolic analogue at the moment. We first observe that all GRU models overpass their RNN variants. Hyperbolic RNNs and GRUs have the most significant improvement over their Euclidean variants when the underlying data structure is more tree-like, e.g. for PREFIX-10\% $-$ for which the tree relation between sentences and their prefixes is more prominent $-$ we reduce the error by a factor of $3.35$ for hyperbolic vs Euclidean RNN, and by a factor of $1.5$ for hyperbolic vs Euclidean GRU. As soon as the underlying structure diverges more and more from a tree, the accuracy gap decreases $-$ for example, for PREFIX-50\% the noise heavily affects the representational power of hyperbolic networks. Also, note that on SNLI our methods perform similarly as with their Euclidean variants. Moreover, hyperbolic and Euclidean MLR are on par when used in conjunction with hyperbolic sentence embeddings, suggesting further empirical investigation is needed for this direction (see below).

We also observe that, in the hyperbolic setting, accuracy tends to increase when sentence embeddings start increasing, and gets better as their norms converge towards 1 (the ball border for $c=1$). Unlike in the Euclidean case, this behavior does happen only after a few epochs and suggests that the model should first adjust the angular layout in order to disentangle the representations, before increasing their norms to fully exploit the strong clustering property of the hyperbolic geometry. Similar behavior was observed in the context of embedding trees by~\citep{nickel2017poincar}. Details in appendix~\ref{sec:exp_res}.

\begin{wrapfigure}{r}{0.61\textwidth}
  \vspace{-10pt}
  \begin{center}
    \includegraphics[width=0.3\textwidth]{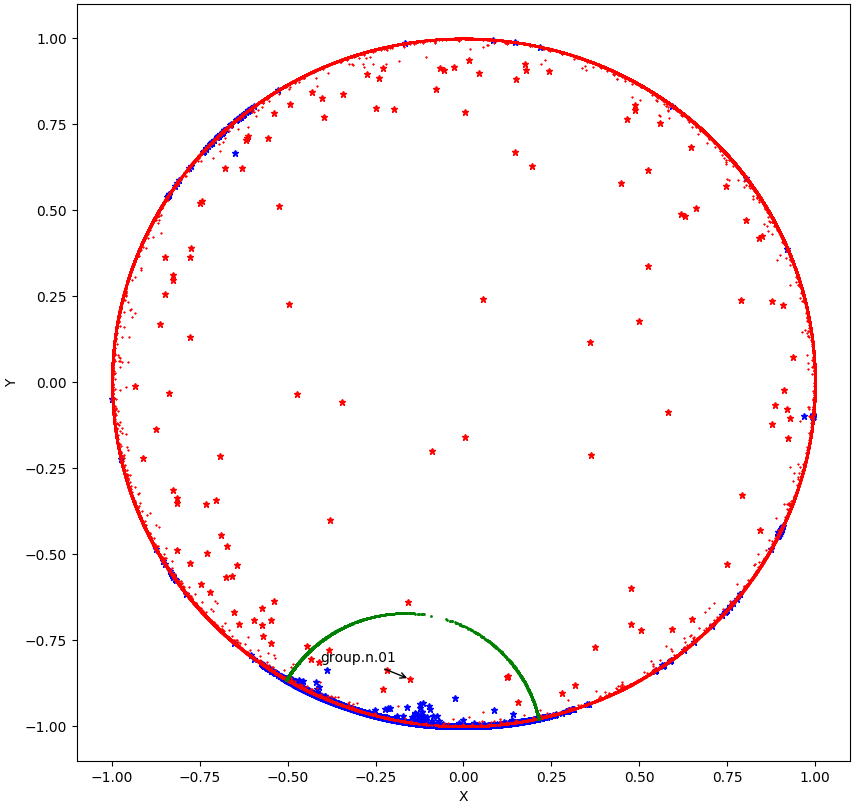}
    \includegraphics[width=0.3\textwidth]{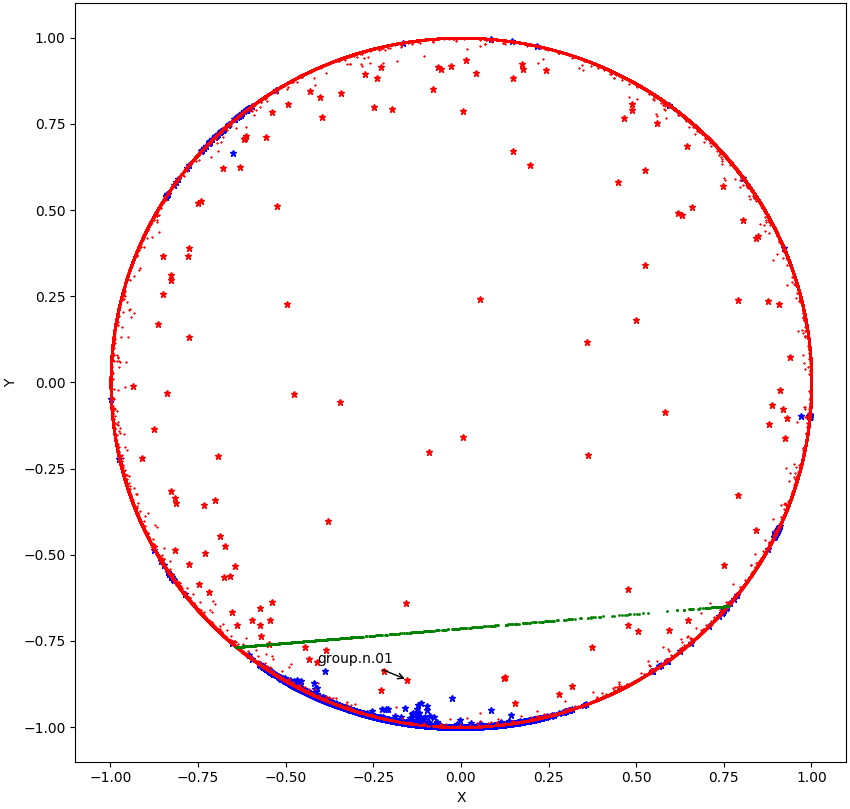}
  \end{center}
  \vspace{-0pt}
  \caption{\textit{Hyperbolic (left) vs Direct Euclidean (right) binary MLR used to classify nodes as being part in the \textsc{group.n.01} subtree of the WordNet noun hierarchy solely based on their Poincar\'e embeddings. The positive points (from the subtree) are in blue, the negative points (the rest) are in red and the trained positive separation hyperplane is depicted in green.}}
  \label{fig:mlr_exp}
  \vspace{-0pt}
\end{wrapfigure}

\paragraph{MLR classification experiments.} For the sentence entailment classification task we do not see a clear advantage of hyperbolic MLR compared to its Euclidean variant. A possible reason is that, when trained end-to-end, the model might decide to place positive and negative embeddings in a manner that is already well separated with a classic MLR. As a consequence, we further investigate MLR for the task of subtree classification. Using an open source implementation\footnote{\url{https://github.com/dalab/hyperbolic_cones}} of \citep{nickel2017poincar}, we pre-trained Poincar\'e embeddings of the WordNet noun hierarchy (82,115 nodes). We then choose one node in this tree (see Table~\ref{tab:results_mlr}) and classify all other nodes (solely based on their embeddings) as being part of the subtree rooted at this node. All nodes in such a subtree are divided into positive training nodes (80\%) and positive test nodes (20\%). The same splitting procedure is applied for the remaining WordNet nodes that are divided into a negative training and negative test set respectively. Three variants of MLR are then trained on top of pre-trained Poincar\'e embeddings~\citep{nickel2017poincar} to solve this binary classification task: hyperbolic MLR, Euclidean MLR applied directly on the hyperbolic embeddings and Euclidean MLR applied after mapping all embeddings in the tangent space at \textbf{0} using the $\log_\textbf{0}$ map. We use different embedding dimensions : 2, 3, 5 and 10. For the hyperbolic MLR, we use full Riemannian SGD with a learning rate of 0.001. For the two Euclidean models we use ADAM optimizer and the same learning rate. During training, we always sample the same number of negative and positive nodes in each minibatch of size 16; thus positive nodes are frequently resampled. All methods are trained for 30 epochs and the final F1 score is reported (no hyperparameters to validate are used, thus we do not require a validation set). This procedure is repeated for four subtrees of different sizes.

Quantitative results are presented in Table~\ref{tab:results_mlr}. We can see that the hyperbolic MLR overpasses its Euclidean variants in almost all settings, sometimes by a large margin. Moreover, to provide further understanding, we plot the 2-dimensional embeddings and the trained separation hyperplanes (geodesics in this case) in Figure~\ref{fig:mlr_exp}. We can see that respecting the hyperbolic geometry is very important for a quality classification model.

\begin{table*}[t]
\hspace{-1cm}
\begin{small}
\begin{sc}
\begin{tabular}{| c || c || c | c | c | c |}
\toprule

\begin{tabular}{@{}c@{}} {WordNet}\\ {subtree} \end{tabular} & Model  & d = 2  &  d = 3  &  d = 5  & d = 10 \\
\midrule
\begin{tabular}{@{}c@{}} {animal.n.01}\\ {3218 / 798} \end{tabular} & \begin{tabular}{@{}c@{}} {Hyperbolic}\\ {Direct Eucl} \\ {$\log_\textbf{0}$ + Eucl} \end{tabular}  &  \begin{tabular}{@{}c@{}} {$\mathbf{47.43 \pm 1.07\%}$ }\\ {$41.69 \pm 0.19\%$ } \\ {$38.89 \pm 0.01\%$ } \end{tabular}  & \begin{tabular}{@{}c@{}} {$\mathbf{91.92 \pm 0.61\%}$ }\\ {$68.43 \pm 3.90\%$ } \\ {$62.57 \pm 0.61\%$ } \end{tabular}  & \begin{tabular}{@{}c@{}} {$\mathbf{98.07 \pm 0.55\%}$ }\\ {$95.59 \pm 1.18\%$ } \\ {$89.21 \pm 1.34\%$ } \end{tabular}  & \begin{tabular}{@{}c@{}} {$\mathbf{99.26 \pm 0.59\%}$ }\\ {$\mathbf{99.36 \pm 0.18\%}$ } \\ { $98.27 \pm 0.70\%$} \end{tabular} \\ \hline

\begin{tabular}{@{}c@{}} {group.n.01}\\ {6649 / 1727} \end{tabular}  & \begin{tabular}{@{}c@{}} {Hyperbolic}\\ {Direct Eucl} \\ {$\log_\textbf{0}$ + Eucl} \end{tabular}  &  \begin{tabular}{@{}c@{}} {$\mathbf{81.72 \pm 0.17\%}$ }\\ {$61.13 \pm 0.42\%$} \\ { $60.75 \pm 0.24\%$} \end{tabular}  & \begin{tabular}{@{}c@{}} {$\mathbf{89.87 \pm 2.73\%}$ }\\ {$63.56 \pm 1.22\%$ } \\ {$61.98 \pm 0.57\%$ } \end{tabular} & \begin{tabular}{@{}c@{}} {$\mathbf{87.89 \pm 0.80\%}$ }\\ {$67.82 \pm 0.81\%$ } \\ {$67.92 \pm 0.74\%$ } \end{tabular} & \begin{tabular}{@{}c@{}} {$\mathbf{91.91 \pm 3.07\%}$ }\\ {$\mathbf{91.38 \pm 1.19\%}$ } \\ {$\mathbf{91.41 \pm 0.18\%}$ } \end{tabular} \\ \hline 

\begin{tabular}{@{}c@{}} {worker.n.01}\\ {861 / 254} \end{tabular}  & \begin{tabular}{@{}c@{}} {Hyperbolic}\\ {Direct Eucl} \\ {$\log_\textbf{0}$ + Eucl} \end{tabular}  & \begin{tabular}{@{}c@{}} {$\mathbf{12.68 \pm 0.82\%}$}\\ { $10.86 \pm 0.01\%$} \\ {$9.04 \pm 0.06\%$ } \end{tabular}   & \begin{tabular}{@{}c@{}} {$\mathbf{24.09 \pm 1.49\%}$ }\\ { $22.39 \pm 0.04\%$} \\ {$22.57 \pm 0.20\%$ } \end{tabular} & \begin{tabular}{@{}c@{}} {$\mathbf{55.46 \pm 5.49\%}$ }\\ {$35.23 \pm 3.16\%$ } \\ { $26.47 \pm 0.78\%$} \end{tabular} & \begin{tabular}{@{}c@{}} {$\mathbf{66.83 \pm 11.38\%}$ }\\ {$47.29 \pm 3.93\%$ } \\ {$36.66 \pm 2.74\%$ } \end{tabular} \\ \hline 

\begin{tabular}{@{}c@{}} {mammal.n.01}\\ {953 / 228} \end{tabular}  & \begin{tabular}{@{}c@{}} {Hyperbolic}\\ {Direct Eucl} \\ {$\log_\textbf{0}$ + Eucl} \end{tabular}  &  \begin{tabular}{@{}c@{}} {$\mathbf{32.01 \pm 17.14\%}$ }\\ { $\mathbf{15.58 \pm 0.04\%}$} \\ {$13.10 \pm 0.13\%$ } \end{tabular}  & \begin{tabular}{@{}c@{}} {$\mathbf{87.54 \pm 4.55\%}$ }\\ {$44.68 \pm 1.87\%$ } \\ {$44.89 \pm 1.18\%$ } \end{tabular} & \begin{tabular}{@{}c@{}} {$\mathbf{88.73 \pm 3.22\%}$ }\\ {$59.35 \pm 1.31\%$ } \\ {$52.51 \pm 0.85\%$ } \end{tabular} & \begin{tabular}{@{}c@{}} {$\mathbf{91.37 \pm 6.09\%}$ }\\ { $77.76 \pm 5.08\%$} \\ {$56.11 \pm 2.21\%$ } \end{tabular} \\ \hline 

\end{tabular}
\end{sc}
\end{small}
\caption{Test F1 classification scores for four different subtrees of WordNet noun tree. All nodes in such a subtree are divided into positive training nodes (80\%) and positive test nodes (20\%); these counts are shown below each subtree root. The same splitting procedure is applied for the remaining nodes to obtain negative training and test sets. Three variants of MLR are then trained on top of pre-trained Poincar\'e embeddings~\citep{nickel2017poincar} to solve this binary classification task: hyperbolic MLR, Euclidean MLR applied directly on the hyperbolic embeddings and Euclidean MLR applied after mapping all embeddings in the tangent space at \textbf{0} using the $\log_\textbf{0}$ map. 95\% confidence intervals for 3 different runs are shown for each method and each different embedding dimension (2, 3, 5 or 10).}
\vskip -0.1in
\label{tab:results_mlr}
\end{table*}


\section{Conclusion}

We showed how classic Euclidean deep learning tools such as MLR, FFNNs, RNNs or GRUs can be generalized in a principled manner to all spaces of constant negative curvature combining Riemannian geometry with the elegant theory of gyrovector spaces. Empirically we found that our models outperform or are on par with corresponding Euclidean architectures on sequential data with implicit hierarchical structure. We hope to trigger exciting future research related to better understanding of the hyperbolic non-convexity spectrum and development of other non-Euclidean deep learning methods. 

Our data and Tensorflow~\citep{abadi2016tensorflow} code are publicly available\footnote{\url{https://github.com/dalab/hyperbolic_nn}}.

\section*{Acknowledgements}
We thank Igor Petrovski for useful pointers regarding the implementation. 

This research is funded by the Swiss National Science Foundation (SNSF) under grant agreement number 167176. Gary B\'ecigneul is also funded by the Max Planck ETH Center for Learning Systems.

\bibliographystyle{plain}
\bibliography{th}

\begin{thebibliography}{10}

\bibitem{abadi2016tensorflow}
Mart{\'\i}n Abadi, Paul Barham, Jianmin Chen, Zhifeng Chen, Andy Davis, Jeffrey
  Dean, Matthieu Devin, Sanjay Ghemawat, Geoffrey Irving, Michael Isard, et~al.
\newblock Tensorflow: A system for large-scale machine learning.
\newblock 2016.

\bibitem{albert2008analytic}
Ungar~Abraham Albert.
\newblock {\em Analytic hyperbolic geometry and Albert Einstein's special
  theory of relativity}.
\newblock World scientific, 2008.

\bibitem{bahdanau2014neural}
Dzmitry Bahdanau, Kyunghyun Cho, and Yoshua Bengio.
\newblock Neural machine translation by jointly learning to align and
  translate.
\newblock In {\em Proceedings of the International Conference on Learning
  Representations (ICLR)}, 2015.

\bibitem{birman2001hyperbolic}
Graciela~S Birman and Abraham~A Ungar.
\newblock The hyperbolic derivative in the poincar{\'e} ball model of
  hyperbolic geometry.
\newblock {\em Journal of mathematical analysis and applications},
  254(1):321--333, 2001.

\bibitem{bonnabel2013stochastic}
S.~Bonnabel.
\newblock Stochastic gradient descent on riemannian manifolds.
\newblock {\em IEEE Transactions on Automatic Control}, 58(9):2217--2229, Sept
  2013.

\bibitem{bordes2013translating}
Antoine Bordes, Nicolas Usunier, Alberto Garcia-Duran, Jason Weston, and Oksana
  Yakhnenko.
\newblock Translating embeddings for modeling multi-relational data.
\newblock In {\em Advances in neural information processing systems (NIPS)},
  pages 2787--2795, 2013.

\bibitem{bowman2015large}
Samuel~R. Bowman, Gabor Angeli, Christopher Potts, and Christopher~D. Manning.
\newblock A large annotated corpus for learning natural language inference.
\newblock In {\em Proceedings of the 2015 Conference on Empirical Methods in
  Natural Language Processing (EMNLP)}, pages 632--642. Association for
  Computational Linguistics, 2015.

\bibitem{bronstein2017geometric}
Michael~M Bronstein, Joan Bruna, Yann LeCun, Arthur Szlam, and Pierre
  Vandergheynst.
\newblock Geometric deep learning: going beyond euclidean data.
\newblock {\em IEEE Signal Processing Magazine}, 34(4):18--42, 2017.

\bibitem{cannon1997hyperbolic}
James~W Cannon, William~J Floyd, Richard Kenyon, Walter~R Parry, et~al.
\newblock Hyperbolic geometry.
\newblock {\em Flavors of geometry}, 31:59--115, 1997.

\bibitem{de2018representation}
Christopher De~Sa, Albert Gu, Christopher R{\'e}, and Frederic Sala.
\newblock Representation tradeoffs for hyperbolic embeddings.
\newblock {\em arXiv preprint arXiv:1804.03329}, 2018.

\bibitem{ganea2018hyperbolic}
Octavian-Eugen Ganea, Gary B{\'e}cigneul, and Thomas Hofmann.
\newblock Hyperbolic entailment cones for learning hierarchical embeddings.
\newblock In {\em Proceedings of the thirty-fifth international conference on
  machine learning (ICML)}, 2018.

\bibitem{gromov1987hyperbolic}
Mikhael Gromov.
\newblock Hyperbolic groups.
\newblock In {\em Essays in group theory}, pages 75--263. Springer, 1987.

\bibitem{hamann_2017}
Matthias Hamann.
\newblock On the tree-likeness of hyperbolic spaces.
\newblock {\em Mathematical Proceedings of the Cambridge Philosophical
  Society}, page 1–17, 2017.

\bibitem{hopper2010ricci}
Christopher Hopper and Ben Andrews.
\newblock {\em The Ricci flow in Riemannian geometry}.
\newblock Springer, 2010.

\bibitem{kim2014convolutional}
Yoon Kim.
\newblock Convolutional neural networks for sentence classification.
\newblock In {\em Proceedings of the 2014 Conference on Empirical Methods in
  Natural Language Processing (EMNLP)}, pages 1746--1751. Association for
  Computational Linguistics, 2014.

\bibitem{kingma2014adam}
Diederik~P Kingma and Jimmy Ba.
\newblock Adam: A method for stochastic optimization.
\newblock In {\em Proceedings of the International Conference on Learning
  Representations (ICLR)}, 2015.

\bibitem{krioukov2010hyperbolic}
Dmitri Krioukov, Fragkiskos Papadopoulos, Maksim Kitsak, Amin Vahdat, and
  Mari{\'a}n Bogun{\'a}.
\newblock Hyperbolic geometry of complex networks.
\newblock {\em Physical Review E}, 82(3):036106, 2010.

\bibitem{lamping1995focus+}
John Lamping, Ramana Rao, and Peter Pirolli.
\newblock A focus+ context technique based on hyperbolic geometry for
  visualizing large hierarchies.
\newblock In {\em Proceedings of the SIGCHI conference on Human factors in
  computing systems}, pages 401--408. ACM Press/Addison-Wesley Publishing Co.,
  1995.

\bibitem{lebanon2004hyperplane}
Guy Lebanon and John Lafferty.
\newblock Hyperplane margin classifiers on the multinomial manifold.
\newblock In {\em Proceedings of the international conference on machine
  learning (ICML)}, page~66. ACM, 2004.

\bibitem{nickel2011three}
Maximilian Nickel, Volker Tresp, and Hans-Peter Kriegel.
\newblock A three-way model for collective learning on multi-relational data.
\newblock In {\em Proceedings of the international conference on machine
  learning (ICML)}, volume~11, pages 809--816, 2011.

\bibitem{nickel2017poincar}
Maximillian Nickel and Douwe Kiela.
\newblock Poincar{\'e} embeddings for learning hierarchical representations.
\newblock In {\em Advances in Neural Information Processing Systems (NIPS)},
  pages 6341--6350, 2017.

\bibitem{rocktaschel2015reasoning}
Tim Rockt{\"a}schel, Edward Grefenstette, Karl~Moritz Hermann, Tom{\'a}{\v{s}}
  Ko{\v{c}}isk{\`y}, and Phil Blunsom.
\newblock Reasoning about entailment with neural attention.
\newblock In {\em Proceedings of the International Conference on Learning
  Representations (ICLR)}, 2015.

\bibitem{spivak1979comprehensive}
Michael Spivak.
\newblock {\em A comprehensive introduction to differential geometry}.
\newblock Publish or perish, 1979.

\bibitem{tallec2018can}
Corentin Tallec and Yann Ollivier.
\newblock Can recurrent neural networks warp time?
\newblock In {\em Proceedings of the International Conference on Learning
  Representations (ICLR)}, 2018.

\bibitem{ungar2001hyperbolic}
Abraham~A Ungar.
\newblock Hyperbolic trigonometry and its application in the poincar{\'e} ball
  model of hyperbolic geometry.
\newblock {\em Computers \& Mathematics with Applications}, 41(1-2):135--147,
  2001.

\bibitem{ungar2008gyrovector}
Abraham~Albert Ungar.
\newblock A gyrovector space approach to hyperbolic geometry.
\newblock {\em Synthesis Lectures on Mathematics and Statistics}, 1(1):1--194,
  2008.

\bibitem{ungar2014analytic}
Abraham~Albert Ungar.
\newblock {\em Analytic hyperbolic geometry in n dimensions: An introduction}.
\newblock CRC Press, 2014.

\bibitem{vendrov2015order}
Ivan Vendrov, Ryan Kiros, Sanja Fidler, and Raquel Urtasun.
\newblock Order-embeddings of images and language.
\newblock In {\em Proceedings of the International Conference on Learning
  Representations (ICLR)}, 2016.

\bibitem{vermeer2005geometric}
J~Vermeer.
\newblock A geometric interpretation of ungar's addition and of gyration in the
  hyperbolic plane.
\newblock {\em Topology and its Applications}, 152(3):226--242, 2005.

\end{thebibliography}

\newpage
\appendix

\section{Hyperbolic Trigonometry}\label{sec:hyp_trig}

\paragraph{Hyperbolic angles.} For $A, B, C \in \D^n_c$, we denote by $\angle A := \angle BAC$ the angle between the two geodesics starting from $A$ and ending at $B$ and $C$ respectively. This angle can be defined in two equivalent ways: i) either using the angle between the initial velocities of the two geodesics as given by Eq.~\ref{eq:hyp_angle_tangent_space}, or ii) using the formula
\begin{equation}\label{eq:gyro_angles}
\cos(\angle A)= \left\langle \dfrac{(-A)\oplus_c B}{\Vert (-A)\oplus_c B\Vert},\dfrac{(-A)\oplus_c C}{\Vert (-A)\oplus_c C\Vert} \right\rangle,
\end{equation}
In this case, $\angle A$ is also called a \textit{gyroangle} in the work of \citep[section 4]{ungar2008gyrovector}. \\

\paragraph{Hyperbolic law of sines.} We state here the hyperbolic law of sines. If for $A, B, C \in \D^n_c$, we denote by $\angle B := \angle ABC$ the angle between the two geodesics starting from $B$ and ending at $A$ and $C$ respectively, and by $\tilde{c} = d_c(B,A)$ the length of the hyperbolic segment BA (and similarly for others), then we have:
\begin{align}\label{eq:sine_law}
\frac{\sin (\angle A)}{\sinh(\sqrt{c}\tilde{a})} = \frac{\sin (\angle B)}{\sinh(\sqrt{c}\tilde{b})}  = \frac{\sin (\angle C)}{\sinh(\sqrt{c}\tilde{c})}.
\end{align}

Note that one can also adapt the hyperbolic law of cosines to the hyperbolic space.

\section{Proof of Theorem~\ref{thm:parallel_transport}}\label{sec:parallel_transport}
\textbf{Theorem~\ref{thm:parallel_transport}.}\\
\textit{In the manifold $(\D_c^n,g^c)$, the parallel transport w.r.t. the Levi-Civita connection of a vector $v\in T_\textbf{0}\D_c^n$ to another tangent space $T_x\D_c^n$ is given by the following isometry:}
\begin{equation}
P^c_{\textbf{0}\to x}(v)=\log^c_x(x\oplus_c\exp_\textbf{0}^c(v))=\dfrac{\lambda_\textbf{0}^c}{\lambda_x^c}v.
\end{equation}
\begin{proof}
The geodesic in $\D_c^n$ from $\textbf{0}$ to $x$ is given in Eq.~(\ref{eq:geodesic_2pts}) by $\gamma(t)= x\otimes_c t$, for $t\in[0,1]$. Let $v\in T_\textbf{0}\D_c^n$. Then it is of common knowledge that there exists a unique parallel\footnote{\textit{i.e.} that $\frac{DX}{\partial t}=0$ for $t\in[0,1]$, where $\frac{D}{\partial t}$ denotes the covariant derivative.} vector field $X$ along $\gamma$ (\textit{i.e.} $X(t)\in T_{\gamma(t)}\D_c^n$, $\forall t\in[0,1]$) such that $X(0)=v$. Let's define:
\begin{equation}
X:t\in[0,1]\mapsto\log^c_{\gamma(t)}(\gamma(t)\oplus_c\exp_\textbf{0}^c(v))\in T_{\gamma(t)}\D_c^n.
\end{equation} 
Clearly, $X$ is a vector field along $\gamma$ such that $X(0)=v$.
Now define 
\begin{equation}
P^c_{\textbf{0}\to x}:v\in T_\textbf{0}\D_c^n\mapsto\log^c_x(x\oplus_c\exp_\textbf{0}^c(v))\in T_x\D_c^n.
\end{equation}
From Eq.~(\ref{eq:gyro_exp_map}), it is easily seen that $P^c_{\textbf{0}\to x}(v)=\frac{\lambda_\textbf{0}^c}{\lambda_x^c}v$, hence $P^c_{\textbf{0}\to x}$ is a linear isometry from $T_\textbf{0}\D_c^n$ to $T_x\D_c^n$. Since $P^c_{\textbf{0}\to x}(v)=X(1)$, it is enough to prove that $X$ is parallel in order to guarantee that $P^c_{\textbf{0}\to x}$ is the parallel transport from $T_\textbf{0}\D_c^n$ to $T_x\D_c^n$. 

Since $X$ is a vector field along $\gamma$, its covariant derivative can be expressed with the Levi-Civita connection $\nabla^c$ associated to $g^c$:
\begin{equation}
\frac{DX}{\partial t}=\nabla^c_{\dot{\gamma}(t)}X.
\end{equation}
Let's compute the Levi-Civita connection from its Christoffel symbols. In a local coordinate system, they can be written as 
\begin{equation}
\Gamma_{jk}^i=\frac{1}{2}(g^c)^{il}(\partial_j g^c_{lk}+\partial_k g^c_{lj}-\partial_l g^c_{jk}),
\end{equation}
where superscripts denote the inverse metric tensor and using Einstein's notations. As $g_{ij}^c=(\lambda^c)^2\delta_{ij}$, at $\gamma(t)\in\D_c^n$ this yields:
\begin{equation}
\Gamma_{jk}^i=c\lambda_{\gamma(t)}^c(\delta_{ik}\gamma(t)_j+\delta_{ij}\gamma(t)_k-\delta_{jk}\gamma(t)_i).
\end{equation}
On the other hand, since $X(t)=(\lambda^c_\textbf{0}/\lambda^c_{\gamma(t)})v$, we have 
\begin{equation}
\nabla^c_{\dot{\gamma}(t)}X=\dot{\gamma}(t)^i\nabla^c_i X=\dot{\gamma}(t)^i\nabla^c_i \left(\dfrac{\lambda^c_\textbf{0}}{\lambda^c_{\gamma(t)}}v\right)= v^j\dot{\gamma}(t)^i\nabla^c_i \left(\dfrac{\lambda^c_\textbf{0}}{\lambda^c_{\gamma(t)}}e_j\right).
\end{equation}
Since $\gamma(t)=(1/\sqrt{c})\tanh(t\tanh^{-1}(\sqrt{c}\Vert x\Vert))\frac{x}{\Vert x\Vert}$, it is easily seen that $\dot{\gamma}(t)$ is colinear to $\gamma(t)$. Hence there exists $K^x_t\in\Re$ such that $\dot{\gamma}(t)=K^x_t\gamma(t)$. Moreover, we have the following Leibniz rule:
\begin{equation}
\nabla^c_i \left(\dfrac{\lambda^c_\textbf{0}}{\lambda^c_{\gamma(t)}}e_j\right)=\dfrac{\lambda^c_\textbf{0}}{\lambda^c_{\gamma(t)}}\nabla_i^c e_j + \dfrac{\partial}{\partial\gamma(t)_i}\left(\dfrac{\lambda^c_\textbf{0}}{\lambda^c_{\gamma(t)}}\right)e_j.
\end{equation}
Combining these yields 
\begin{equation}
\frac{DX}{\partial t}=K^x_t v^j\gamma(t)^i\left(\dfrac{\lambda^c_\textbf{0}}{\lambda^c_{\gamma(t)}}\nabla_i^c e_j + \dfrac{\partial}{\partial\gamma(t)_i}\left(\dfrac{\lambda^c_\textbf{0}}{\lambda^c_{\gamma(t)}}\right)e_j\right).
\end{equation}
Replacing with the Christoffel symbols of $\nabla^c$ at $\gamma(t)$ gives
\begin{equation}
\dfrac{\lambda^c_\textbf{0}}{\lambda^c_{\gamma(t)}}\nabla^c_i e_j=
\dfrac{\lambda^c_\textbf{0}}{\lambda^c_{\gamma(t)}}\Gamma_{ij}^k e_k=2c[\delta_j^k\gamma(t)_i+\delta_i^k\gamma(t)_j-\delta_{ij}\gamma(t)^k]e_k.
\end{equation}
Moreover, 
\begin{equation}
\dfrac{\partial}{\partial\gamma(t)_i}\left(\dfrac{\lambda^c_\textbf{0}}{\lambda^c_{\gamma(t)}}\right)e_j=\dfrac{\partial}{\partial\gamma(t)_i}\left(-c\Vert\gamma(t)\Vert^2\right)e_j=-2c\gamma(t)_i e_j.
\end{equation}
Putting together everything, we obtain
\begin{align}
\frac{DX}{\partial t} &= K^x_t v^j\gamma(t)^i\left(2c[\delta_j^k\gamma(t)_i+\delta_i^k\gamma(t)_j-\delta_{ij}\gamma(t)^k]e_k-2c\gamma(t)_i e_j\right)\\
&=2cK^x_t v^j\gamma(t)^i\left(\gamma(t)_je_i-\delta_{ij}\gamma(t)^ke_k\right)\\
&=2cK^x_t v^j\left(\gamma(t)_j\gamma(t)^i e_i-\gamma(t)^i\delta_{ij}\gamma(t)^ke_k\right)\\
&=2cK^x_t v^j\left(\gamma(t)_j\gamma(t)^i e_i-\gamma(t)_j\gamma(t)^ke_k\right)\\
&=0,
\end{align}
which concludes the proof.
\end{proof}

\section{Proof of Eq.~(\ref{eq:poincare_hyp_def})}\label{sec:poinc_hyp_defs_lemma_proof}
\begin{proof}
Two steps proof:

\textit{i)} $\exp^c_{p}(\{a\}^{\perp}) \subseteq \{x \in \D^n_c : \inp{-p \oplus_c x}{a} = 0 \} $:

Let $z \in \{a\}^{\perp}$. From Eq.~(\ref{eq:gyro_exp_map}), we have that:
\begin{align}
\exp^c_{p}(z) = -p \oplus_c \beta z, \quad \text{for\ some\ } \beta \in \Re.
\end{align}
This, together with the left-cancellation law in gyrospaces (see section~\ref{sec:gyro_sec}), implies that 
\begin{align}
\inp{-p \oplus_c \exp^c_{p}(z)}{a} = \inp{\beta z}{a} = 0
\end{align}
which is what we wanted. \\

\textit{ii)} $\{x \in \D^n_c : \inp{-p \oplus_c x}{a} = 0 \} \subseteq \exp^c_{p}(\{a\}^{\perp})$: 

Let $x \in \D^n_c$ s.t. $\inp{-p \oplus_c x}{a} = 0$. Then, using Eq.~(\ref{eq:gyro_exp_map}), we derive that: 
\begin{align}
\log_{p}^c(x) = \beta (-p\oplus_c x), \quad \text{for\ some\ } \beta \in \Re,
\end{align}
which is orthogonal to $a$, by assumption. This implies $\log_{p}^c(x) \in \{a\}^{\perp}$, hence $x \in \exp^c_{p}(\{a\}^{\perp})$.
\end{proof}

\section{Proof of Theorem~\ref{thm:hyp_dist_lemma_proof}}\label{sec:hyp_dist_lemma_proof}
\textbf{Theorem~\ref{thm:hyp_dist_lemma_proof}.}\\
\begin{align}
d_c(x,\tilde{H}_{a,p}^{c}) := \inf_{w \in \tilde{H}_{a,p}^{c}} d_c(x,w) = \dfrac{1}{\sqrt{c}}\sinh^{-1}\left( \frac{2\sqrt{c}|\inp{-p\oplus_c x}{a}|}{(1 -c \| -p\oplus_c x \|^2)\Vert a\Vert} \right).
\end{align}
\textit{Proof.}
We  first need to prove the following lemma, trivial in the Euclidean space, but not in the Poincar\'e ball:

\begin{lemma} (Orthogonal projection on a geodesic)
\label{lemma:proj_geodesic}
Any point in the Poincar\'e ball has a unique orthogonal projection on any given geodesic that does not pass through the point. Formally, for all $y \in \D^n_c$ and for all geodesics $\gamma_{x \rightarrow z}(\cdot)$ s.t. $y \notin \text{Im\ } \gamma_{x \rightarrow z}$, there exists an unique $w \in \text{Im\ } \gamma_{x \rightarrow z}$ s.t. $\angle (\gamma_{w \rightarrow y}, \gamma_{x \rightarrow z}) = \pi/2$.
\end{lemma}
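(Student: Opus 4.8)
\textbf{Proof plan for Lemma~\ref{lemma:proj_geodesic}.}

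The plan is to reduce the problem to a one-dimensional minimization along the geodesic and to exploit the explicit formulas for $d_c$ and for hyperbolic trigonometry. First I would parametrize the geodesic by arc length, writing $\gamma(s)$ for $s \in \Re$ with $\Vert \dot\gamma(s)\Vert_{g^c} = 1$, using the unit-speed geodesic formula of Eq.~(\ref{eq:gyro_unitspeed_geodesic}); since the Poincar\'e ball is geodesically complete, $\gamma$ is defined on all of $\Re$ and $d_c(\gamma(s),\gamma(s')) = |s-s'|$. Then I would study the function $\varphi(s) := d_c(y,\gamma(s))$ and show it attains a unique minimum. Existence is easy: $\varphi$ is continuous, and $\varphi(s) \geq |s - s_0| - d_c(y,\gamma(s_0)) \to \infty$ as $|s|\to\infty$ by the triangle inequality, so $\varphi$ is coercive and attains a minimum at some $s^*$; the minimizing $w := \gamma(s^*)$ is distinct from $y$ since $y\notin\mathrm{Im}\,\gamma$. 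For uniqueness I would use strict convexity of the squared distance function along geodesics in a Hadamard manifold (the Poincar\'e ball has constant negative curvature, hence is CAT($0$)): $s\mapsto d_c(y,\gamma(s))^2$ is strictly convex, so its minimizer is unique.

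Next I would show that the minimizing point $w=\gamma(s^*)$ is characterized by the orthogonality condition $\angle(\gamma_{w\to y},\gamma_{x\to z}) = \pi/2$. The standard first-variation-of-arc-length argument works here: differentiating $\varphi(s)^2 = d_c(y,\gamma(s))^2$ and using the Riemannian first variation formula, one gets $\frac{d}{ds}\big(\tfrac12\varphi(s)^2\big) = -g^c_{\gamma(s)}\big(\log^c_{\gamma(s)}(y),\dot\gamma(s)\big)$, where $\log^c_{\gamma(s)}(y)$ is the initial velocity of the geodesic from $\gamma(s)$ to $y$ (see Eq.~(\ref{eq:gyro_exp_map})). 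Setting this to zero at $s^*$ gives $g^c_{w}\big(\log^c_{w}(y),\dot\gamma(s^*)\big) = 0$, which by the definition of the hyperbolic angle (Eq.~(\ref{eq:hyp_angle_tangent_space}), or equivalently the gyroangle formula Eq.~(\ref{eq:gyro_angles})) is exactly $\angle(\gamma_{w\to y},\gamma_{x\to z}) = \pi/2$. Conversely, since $\varphi^2$ is strictly convex with a unique critical point, any $w'$ satisfying the orthogonality condition must be that critical point, giving uniqueness of the orthogonal foot as well.

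The main obstacle I expect is making the convexity/uniqueness argument fully rigorous without simply invoking CAT($0$) machinery as a black box: one should either cite that the Poincar\'e ball is a Hadamard manifold (complete, simply connected, nonpositive sectional curvature $\equiv -1$ after the conformal normalization) so that squared distance is strictly geodesically convex, or else verify convexity directly. A clean self-contained alternative is to work in the model where computations simplify: apply the M\"obius translation $\oplus_c$ by $-x$ (an isometry fixing the structure, by the left-cancellation law and since $d_c$ is $\oplus_c$-invariant) to move one endpoint of the geodesic to $\textbf{0}$, so the geodesic becomes a Euclidean diameter segment through the origin; then the distance $d_c(y,\gamma(s))$ can be written out explicitly via Eq.~(\ref{eq:mobius_dist}) and one checks directly that it has a unique stationary point, with the orthogonality reading off from Eq.~(\ref{eq:gyro_angles}). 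Everything else — existence by coercivity, the first-variation computation — is routine.
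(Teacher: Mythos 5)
Your proposal is correct, but it takes a genuinely different route from the paper. The paper's proof is synthetic and trigonometric: for existence it applies the intermediate value theorem to the continuous angle function $t\mapsto\cos(\angle\, y\,\gamma_{x\to z}(t)\,z')$, showing this cosine sweeps from $1$ to $-1$ as $w$ moves between the two ideal endpoints $x',z'\in\partial\D^n_c$ of the geodesic, so it must vanish somewhere; for uniqueness it argues by contradiction that two orthogonal feet would produce a hyperbolic triangle with two right angles, which is impossible since angle sums in the Poincar\'e ball are strictly less than $\pi$. You instead recast the problem as minimizing $\varphi(s)=d_c(y,\gamma(s))$ along a unit-speed parametrization: coercivity via the triangle inequality gives existence of a minimizer, strict geodesic convexity of the squared distance (the ball being a Hadamard manifold) gives uniqueness, and the first-variation formula identifies critical points of $\varphi^2$ exactly with orthogonal feet — your normalization check is consistent with the paper's $\log^c_x$, whose $g^c$-norm does equal $d_c(x,y)$. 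Your route buys more: it simultaneously establishes the companion Lemma~\ref{lemma:min_dist_pt_geodesic} (the orthogonal foot minimizes distance), which the paper proves separately via the hyperbolic law of sines. The trade-off is that you lean on the Hadamard/CAT($0$) convexity fact as a black box (or must verify it by hand, e.g.\ after translating an endpoint to the origin as you suggest), whereas the paper's argument is elementary given the gyroangle and hyperbolic-trigonometry facts it sets up in its appendix. Both proofs are sound.
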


\begin{proof}
We first note that any geodesic in $\D^n_c$ has the form $\gamma(t) = u \oplus_c v \otimes_c t$ as given by Eq.~\ref{eq:gyro_unitspeed_geodesic}, and has two "points at infinity" lying on the ball border $(v\neq\textbf{0}$):
\begin{align}
\gamma(\pm \infty) = u \oplus_c \frac{ \pm v}{\sqrt{c}\|v\|} \in \partial \D^n_c.
\label{eq:pts_at_infinity_geodesic}
\end{align}

Using the notations in the lemma statement, the closed-form of $\gamma_{x \rightarrow z}$ is given by Eq.~(\ref{eq:geodesic_2pts}): 
$$\gamma_{x \rightarrow z}(t) = x \oplus_c (-x \oplus_c z) \otimes_c t$$
We denote by $x',z' \in \partial \D^n_c$ its points at infinity as described by Eq.~(\ref{eq:pts_at_infinity_geodesic}). Then, the hyperbolic angle $\angle ywx'$ is well defined from Eq.~(\ref{eq:gyro_angles}):
\begin{align}
\cos(\angle (\gamma_{w \rightarrow y}, \gamma_{x \rightarrow z})) = \cos(\angle ywz') = \frac{\inp{-w \oplus_c y}{-w \oplus_c z'}}{\|-w \oplus_c y\| \cdot \|-w \oplus_c z'\|}.
\label{eq:angle_thm_proof}
\end{align}
We now perform 2 steps for this proof.
\par \textit{i) Existence of $w$}:

The angle function from Eq.~(\ref{eq:angle_thm_proof}) is continuous w.r.t $t$ when $w = \gamma_{x \rightarrow z}(t)$. So we first prove existence of an angle of $\pi/2$ by continuously moving $w$ from $x'$ to $z'$ when $t$ goes from $-\infty$ to $\infty$, and observing that $\cos(\angle ywz')$ goes from $-1$ to $1$ as follows:
\begin{align}
\cos(\angle yx'z') = 1 \quad \& \quad \lim_{w \rightarrow z'} \cos(\angle ywz') = -1.
\label{eq:two_angles_at_inf}
\end{align}
The left part of Eq.~(\ref{eq:two_angles_at_inf}) follows from Eq.~(\ref{eq:angle_thm_proof}) and from the fact (easy to show from the definition of $\oplus_c$) that $a \oplus_c b = a$, when $\|a\| = 1/\sqrt{c}$ (which is the case of $x'$). The right part of Eq.~(\ref{eq:two_angles_at_inf}) follows from the fact that $\angle ywz' = \pi - \angle ywx'$ (from the conformal property, or from Eq.~(\ref{eq:gyro_angles})) and $\cos(\angle yz'x') = 1$ (proved as above).

Hence $\cos(\angle ywz')$ has to pass through $0$ when going from $-1$ to $1$, which achieves the proof of existence.

\par \textit{ii) Uniqueness of $w$}:

Assume by contradiction that there are two $w$ and $w'$ on $\gamma_{x \rightarrow z}$ that form angles $\angle ywx'$ and $\angle yw'x'$ of $\pi/2$. Since $w, w', x'$ are on the same geodesic, we have
\begin{align}
\pi/2 = \angle yw'x' = \angle yw'w = \angle ywx' = \angle yw'w
\end{align}
So $\Delta yww'$ has two right angles, but in the Poincar\'e ball this is impossible. 
\end{proof}

Now, we need two more lemmas:

\begin{lemma}(Minimizing distance from point to geodesic)
\label{lemma:min_dist_pt_geodesic}
The orthogonal projection of a point to a geodesic (not passing through the point) is minimizing the distance between the point and the geodesic.
\end{lemma}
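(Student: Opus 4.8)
The plan is to reduce the statement to the two-dimensional situation, where it becomes a classical fact about hyperbolic triangles, and then transfer back. Concretely, let $\gamma := \gamma_{x\to z}$ be the geodesic, let $y\notin\operatorname{Im}\gamma$, and let $w\in\operatorname{Im}\gamma$ be the unique orthogonal projection of $y$ onto $\gamma$ whose existence and uniqueness are granted by Lemma~\ref{lemma:proj_geodesic}. I want to show $d_c(y,w)\le d_c(y,w')$ for every other $w'\in\operatorname{Im}\gamma$. First I would fix such a $w'\neq w$ and consider the hyperbolic triangle with vertices $y$, $w$, $w'$: since $w,w'$ both lie on $\gamma$, the side $ww'$ lies along $\gamma$, and by construction the angle at $w$ is $\angle ywb' = \pi/2$ (where $b'$ is whichever point at infinity of $\gamma$ lies on the same side as $w'$, using $\angle yww' = \angle ywb'$ because $w,w',b'$ are colinear). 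So in this triangle the angle at $w$ is a right angle.

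Next I would invoke the hyperbolic law of sines from Appendix~\ref{sec:hyp_trig}, Eq.~(\ref{eq:sine_law}), applied to the triangle $ywx'$ wait --- to the triangle $y w w'$. Denote $\angle w' $ the angle at $w'$ and $\angle w = \pi/2$ the angle at $w$. Writing $\tilde a = d_c(w,w')$ for the side opposite $y$... actually the relevant comparison is between the side opposite the right angle (which is $d_c(y,w')$, the "hypotenuse") and the side opposite the angle at $w'$ (which is $d_c(y,w)$). The law of sines gives
\begin{align}
\frac{\sin(\angle w')}{\sinh(\sqrt{c}\, d_c(y,w))} = \frac{\sin(\pi/2)}{\sinh(\sqrt{c}\, d_c(y,w'))},
\end{align}
hence $\sinh(\sqrt{c}\,d_c(y,w')) = \sinh(\sqrt{c}\,d_c(y,w))/\sin(\angle w') \ge \sinh(\sqrt{c}\,d_c(y,w))$, and since $t\mapsto\sinh(\sqrt{c}\,t)$ is strictly increasing on $[0,\infty)$ we conclude $d_c(y,w')\ge d_c(y,w)$, with equality only if $\angle w' = \pi/2$, which is excluded (a hyperbolic triangle cannot have two right angles, as already used in Lemma~\ref{lemma:proj_geodesic}). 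This shows $w$ is the unique minimizer, so $d_c(y,\operatorname{Im}\gamma) = d_c(y,w)$ is attained exactly at the orthogonal projection.

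I should be careful about one degenerate case: if $y$, $w$, $w'$ happen to be colinear the "triangle" collapses, but this cannot occur here since $y\notin\operatorname{Im}\gamma$ while $w,w'\in\operatorname{Im}\gamma$, so $y$ is off the line through $w$ and $w'$; hence the triangle is non-degenerate and the law of sines applies. The main obstacle I anticipate is purely bookkeeping: making sure the law of sines in Eq.~(\ref{eq:sine_law}) is correctly instantiated (matching each angle with the side opposite to it, and confirming the factor $\sqrt{c}$ appears inside $\sinh$ but not inside $\sin$), and justifying the "no two right angles" fact in the generalized ball $\D_c^n$ — though the latter follows, exactly as in the proof of Lemma~\ref{lemma:proj_geodesic}, from the angle sum of a hyperbolic triangle being strictly less than $\pi$, or alternatively can be seen directly from the defect formula. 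Everything else is a one-line monotonicity argument.
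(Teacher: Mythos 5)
Your argument is correct and follows exactly the route the paper's (one-line) proof indicates: apply the hyperbolic law of sines to the right triangle $y\,w\,w'$ with the right angle at the orthogonal projection $w$, and conclude from $\sin(\angle w')\le 1$ together with the monotonicity of $\sinh$ that the hypotenuse $d_c(y,w')$ strictly exceeds $d_c(y,w)$, the strictness coming from the impossibility of two right angles. Your instantiation of Eq.~(\ref{eq:sine_law}) (each angle paired with the $\sinh$ of the opposite side, with $\sqrt{c}$ inside the $\sinh$ only) and the non-degeneracy remark are both correct, so this is simply a fleshed-out version of the paper's proof.
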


\begin{proof}
The proof is similar with the Euclidean case and it's based on hyperbolic sine law and the fact that in any right hyperbolic triangle the hypotenuse is strictly longer than any of the other sides. 
\end{proof}

\begin{lemma}(Geodesics through $p$) 
\label{lemma:geos_proj_zero}
Let $\tilde{H}_{a,p}^c$ be a Poincar\'e hyperplane. Then, for any $w \in \tilde{H}_{a,p}^c \setminus \{p\}$, all points on the geodesic $\gamma_{p \rightarrow w}$ are included in $\tilde{H}_{a,p}^c$. 
\end{lemma}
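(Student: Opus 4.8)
The plan is to exploit the explicit parametrization of the geodesic through $p$ and $w$, the left-cancellation law in the gyrospace, and the direction-preserving property of M\"obius scalar multiplication, so that membership in $\tilde{H}_{a,p}^c$ becomes a one-line algebraic check.

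First I would recall from Eq.~(\ref{eq:geodesic_2pts}) that
$\gamma_{p\to w}(t) = p\oplus_c (-p\oplus_c w)\otimes_c t$,
and note that since $w\neq p$ we have $-p\oplus_c w\neq\textbf{0}$ (otherwise the left-cancellation law would give $w = p\oplus_c\textbf{0} = p$). Next I would apply the left-cancellation law $(-u)\oplus_c(u\oplus_c v)=v$ (stated in Section~\ref{sec:gyro_sec}) with $u=p$ and $v=(-p\oplus_c w)\otimes_c t$ to obtain
$-p\oplus_c\gamma_{p\to w}(t) = (-p\oplus_c w)\otimes_c t$.
Then, by the definition of M\"obius scalar multiplication in Eq.~(\ref{eq:mobius_mult}), for any nonzero $u$ the vector $t\otimes_c u$ equals $\alpha_t u$ for the scalar $\alpha_t = \tanh(t\tanh^{-1}(\sqrt{c}\Vert u\Vert))/(\sqrt{c}\Vert u\Vert)\in\Re$, i.e.\ it is collinear with $u$. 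Hence $-p\oplus_c\gamma_{p\to w}(t) = \alpha_t(-p\oplus_c w)$.

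Finally, since $w\in\tilde{H}_{a,p}^c$ we have $\inp{-p\oplus_c w}{a}=0$ by the definition in Eq.~(\ref{eq:poincare_hyp_def}), so $\inp{-p\oplus_c\gamma_{p\to w}(t)}{a} = \alpha_t\inp{-p\oplus_c w}{a} = 0$ for every $t$, which means $\gamma_{p\to w}(t)\in\tilde{H}_{a,p}^c$. (Equivalently, one can argue via $\tilde{H}_{a,p}^c=\exp_p^c(\{a\}^{\perp})$: since $\gamma_{p\to w}(t)=\exp_p^c(t\log_p^c(w))$ and $\{a\}^{\perp}$ is a linear subspace of $T_p\D_c^n$, $\log_p^c(w)\in\{a\}^{\perp}$ implies $t\log_p^c(w)\in\{a\}^{\perp}$ for all $t$.) There is essentially no substantive obstacle here; the only points requiring care are that $-p\oplus_c w\neq\textbf{0}$ (guaranteed by $w\neq p$) and that M\"obius scalar multiplication preserves direction — both already available from the material in Section~\ref{sec:gyro_sec}.
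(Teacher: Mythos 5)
Your proposal is correct and follows exactly the paper's argument: parametrize the geodesic as $p\oplus_c(-p\oplus_c w)\otimes_c t$, apply left-cancellation to reduce the membership condition to $\inp{(-p\oplus_c w)\otimes_c t}{a}$, and use collinearity of $t\otimes_c u$ with $u$ to conclude this is proportional to $\inp{-p\oplus_c w}{a}=0$. You simply spell out the left-cancellation and direction-preservation steps that the paper leaves implicit.
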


\begin{proof}
$\gamma_{p \rightarrow w}(t) = p \oplus_c (-p \oplus_c w) \otimes_c t$. Then, it is easy to check the condition in Eq.~(\ref{eq:poincare_hyp_def}):
\begin{align}
\inp{-p \oplus_c \gamma_{p \rightarrow w}(t)}{a} = \inp{(-p \oplus_c w) \otimes_c t}{a} \propto \inp{-p \oplus_c w}{a}=0.
\end{align}
\end{proof}

We now turn back to our proof. Let  $x\in\D^n_c$ be an arbitrary point and $\tilde{H}_{a,p}^c$ a Poincar\'e hyperplane. We prove that there is at least one point $w^* \in\tilde{H}_{a,p}^c$ that achieves the infimum distance
\begin{align}
d_c(x, w^*) = \inf_{w \in\tilde{H}_{a,p}^c} d_c(x,w),
\label{eq:min_dist_them_proof}
\end{align}
and, moreover, that this distance is the same as the one in the theorem's statement.

We first note that for any point $w \in\tilde{H}_{a,p}^c$, if $\angle xwp \neq \pi/2$, then $w \neq w^*$. Indeed, using Lemma~\ref{lemma:min_dist_pt_geodesic} and Lemma~\ref{lemma:geos_proj_zero}, it is obvious that the projection of $x$ to $\gamma_{p \rightarrow w}$ will give a strictly lower distance.

Thus, we only consider $w \in\tilde{H}_{a,p}^{c}$ such that $\angle xwp = \pi/2$. Applying hyperbolic sine law in the right triangle $\Delta xwp$, one gets:
\begin{align}
d_c(x,w) = (1/\sqrt{c})\sinh^{-1} \left( \sinh(\sqrt{c}\ d_c(x,p)) \cdot \sin(\angle xpw) \right).
\label{eq:min_dist_sine_law_applied}
\end{align}

One of the above quantities does not depend on $w$:

\begin{align}
\sinh(\sqrt{c}\ d_c(x,p)) = \sinh(2 \tanh^{-1} (\sqrt{c}\|-p \oplus_c x\|)) = \frac{2 \sqrt{c}\|-p \oplus_c x\|}{1 - c\|-p \oplus_c x\|^2}.
\label{eq:smthg_with_sinh}
\end{align}

The other quantity is $\sin(\angle xpw)$ which is minimized when the angle $\angle xpw$ is minimized (because $\angle xpw < \pi/2$ for the hyperbolic right triangle $\Delta xwp$), or, alternatively, when $\cos(\angle xpw)$ is maximized. But, we already have from Eq.~(\ref{eq:gyro_angles}) that:
\begin{align}
\cos(\angle xpw) = \frac{\inp{-p \oplus_c x}{-p \oplus_c w}}{\|-p \oplus_c x\| \cdot \|-p \oplus_c w\|}.
\label{eq:cos_xpw}
\end{align}
To maximize the above, the constraint on the right angle at $w$ can be dropped because $\cos(\angle xpw)$ depends only on the geodesic $\gamma_{p \rightarrow w}$ and not on $w$ itself, and because there is always an orthogonal projection from any point $x$ to any geodesic as stated by Lemma~\ref{lemma:proj_geodesic}. Thus, it remains to find the maximum of Eq.~(\ref{eq:cos_xpw}) when $w \in\tilde{H}_{a,p}^c$. Using the definition of $\tilde{H}_{a,p}^c$ from Eq.~(\ref{eq:poincare_hyp_def}), one can easily prove that
\begin{align}
\{\log^c_p(w) : w \in \tilde{H}_{a,p}^c\} = \{a\}^{\perp}.
\end{align}
Using that fact that $\log^c_p(w)/\Vert\log^c_p(w)\Vert=-p \oplus_c w/\Vert-p \oplus_c w\Vert$, we just have to find
\begin{align}
\max_{z \in \{a\}^{\perp}} \left( \frac{\inp{-p \oplus_c x}{z}}{\|-p \oplus_c x\| \cdot \|z\|} \right),
\end{align}
and we are left with a well known Euclidean problem which is equivalent to finding the minimum angle between the vector $-p \oplus_c x$ (viewed as Euclidean) and the hyperplane $\{a\}^{\perp}$. This angle is given by the Euclidean orthogonal projection whose $sin$ value is the distance from the vector's endpoint to the hyperplane divided by the vector's length:
\begin{align}
\sin(\angle xpw^*) = \frac{|\inp{-p \oplus_c x}{\frac{a}{\Vert a\Vert}}|}{\|-p \oplus_c x\|}.
\label{eq:almost_done_thm_proof}
\end{align}

It follows that a point $w^* \in\tilde{H}_{a,p}^c$ satisfying Eq.~(\ref{eq:almost_done_thm_proof}) exists (but might not be unique). Combining Eqs.~(\ref{eq:min_dist_them_proof}),(\ref{eq:min_dist_sine_law_applied}),(\ref{eq:smthg_with_sinh}) and (\ref{eq:almost_done_thm_proof}) concludes the proof.
\begin{flushright}
$\square$
\end{flushright}

\section{Derivation of the Hyperbolic GRU Update-gate}\label{sec:hyp_GRU}

In \cite{tallec2018can}, the authors recover the update/forget-gate mechanism of a GRU/LSTM by requiring that the class of neural networks given by the chosen architecture be invariant to \textit{time-warpings}. The idea is the following. 

\paragraph{Recovering the update-gate from time-warping.}
A naive RNN is given by the equation 
\begin{align}
h(t+1)=\varphi(W h(t) + U x(t) +b)
\end{align}

Let's drop the bias $b$ to simplify notations. If $h$ is seen as a differentiable function of time, then a first-order Taylor development gives $h(t+\delta t)\approx h(t)+\delta t\frac{dh}{dt}(t)$ for small $\delta t$. Combining this for $\delta t=1$ with the naive RNN equation, one gets 
\begin{equation}
\dfrac{dh}{dt}(t)=\varphi(W h(t) + U x(t))-h(t).
\end{equation}
As this is written for any $t$, one can replace it by $t\leftarrow \alpha(t)$ where $\alpha$ is a (smooth) increasing function of $t$ called the \textit{time-warping}. Denoting by $\tilde{h}(t):=h(\alpha(t))$ and $\tilde{x}(t):=x(\alpha(t))$, using the chain rule $\frac{d\tilde{h}}{dt}(t)=\frac{d\alpha}{dt}(t)\frac{dh}{dt}(\alpha(t))$, one gets 
\begin{equation}\label{eq:warping}
\dfrac{d\tilde{h}}{dt}(t)=\frac{d\alpha}{dt}(t)\varphi(W \tilde{h}(t) + U \tilde{x}(t))-\frac{d\alpha}{dt}(t)\tilde{h}(t).
\end{equation}
Removing the tildas to simplify notations, discretizing back with $\frac{dh}{dt}(t)\approx h(t+1)-h(t)$ yields 
\begin{equation}\label{eq:c_warping}
h(t+1)=\frac{d\alpha}{dt}(t)\varphi(W h(t) + U x(t))+\left(1-\frac{d\alpha}{dt}(t)\right)h(t).
\end{equation}
Requiring that our class of neural networks be invariant to time-warpings means that this class should contain RNNs defined by Eq.~(\ref{eq:c_warping}), \textit{i.e.} that $\frac{d\alpha}{dt}(t)$ can be learned. As this is a positive quantity, we can parametrize it as $z(t)=\sigma(W^{z} h(t)+U^z x(t))$, recovering the forget-gate equation:
\begin{equation}
h(t+1)=z(t)\varphi(W h(t) + U x(t))+(1-z(t))h(t).
\end{equation}

\paragraph{Adapting this idea to hyperbolic RNNs.}
The \textit{gyroderivative}~\cite{birman2001hyperbolic} of a map $h:\Re\to\D_c^n$ is defined as
\begin{equation}
\dfrac{dh}{dt}(t)=\lim_{\delta t\to 0}\dfrac{1}{\delta t}\otimes_c(-h(t)\oplus_c h(t+\delta t)).
\end{equation}
Using M\"obius scalar associativity and the left-cancellation law leads us to 
\begin{equation}\label{eq:hyp_taylor}
h(t+\delta t)\approx h(t)\oplus_c\delta t\otimes_c\frac{dh}{dt}(t),
\end{equation}
for small $\delta t$. Combining this with the equation of a simple hyperbolic RNN of Eq.~(\ref{eq:naive_hyp_rnn}) with $\delta t=1$, one gets 
\begin{equation}\label{eq:hyp_eq_diff}
\frac{dh}{dt}(t)=-h(t)\oplus_c\varphi^{\otimes_c}(W\otimes_c h(t)\oplus_c U\otimes_c x(t)).
\end{equation}
For the next step, we need the following lemma:
\begin{lemma}[Gyro-chain-rule]\label{eq:gyrochainrule}
For $\alpha:\Re\to\Re$ differentiable and $h:\Re\to\D_c^n$ with a well-defined gyro-derivative, if $\tilde{h}:=h\circ \alpha$, then we have 
\begin{equation}
\frac{d\tilde{h}}{dt}(t)=\frac{d\alpha}{dt}(t)\otimes_c\frac{dh}{dt}(\alpha(t)),
\end{equation}
where $\frac{d\alpha}{dt}(t)$ denotes the usual derivative.
\end{lemma}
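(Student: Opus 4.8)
The plan is to unwind the definition of the gyroderivative of $\tilde h=h\circ\alpha$ at a point $t$, rewrite the increment of $h$ over the (warped) time-step using only M\"obius scalar associativity, and then pass to the limit by combining the differentiability of $\alpha$ with the continuity of M\"obius scalar multiplication. First I would fix $t$ and, for $\delta t\neq 0$ small, set $s:=\alpha(t)$ and $\delta s:=\alpha(t+\delta t)-\alpha(t)$; continuity of $\alpha$ gives $\delta s\to 0$ as $\delta t\to 0$, while differentiability gives $\delta s/\delta t\to\tfrac{d\alpha}{dt}(t)$. Since $\tilde h(t)=h(s)$ and $\tilde h(t+\delta t)=h(s+\delta s)$, the difference quotient defining $\tfrac{d\tilde h}{dt}(t)$ equals $\tfrac{1}{\delta t}\otimes_c\bigl(-h(s)\oplus_c h(s+\delta s)\bigr)$.

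Next I would invoke the identity, valid for $\delta s\neq 0$,
\[
-h(s)\oplus_c h(s+\delta s)=\delta s\otimes_c D(\delta s),\qquad\text{where}\qquad D(\delta s):=\tfrac{1}{\delta s}\otimes_c\bigl(-h(s)\oplus_c h(s+\delta s)\bigr),
\]
which is nothing but M\"obius scalar associativity $(\delta s\cdot\tfrac{1}{\delta s})\otimes_c w=\delta s\otimes_c(\tfrac{1}{\delta s}\otimes_c w)$ together with $1\otimes_c w=w$ (both recorded in Section~\ref{sec:gyro_sec}). By the very definition of the gyroderivative of $h$ we have $D(\delta s)\to\tfrac{dh}{dt}(s)=\tfrac{dh}{dt}(\alpha(t))$ as $\delta s\to 0$. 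Substituting and using scalar associativity once more,
\[
\tfrac{1}{\delta t}\otimes_c\bigl(-\tilde h(t)\oplus_c\tilde h(t+\delta t)\bigr)=\tfrac{1}{\delta t}\otimes_c\bigl(\delta s\otimes_c D(\delta s)\bigr)=\tfrac{\delta s}{\delta t}\otimes_c D(\delta s).
\]
Then I would let $\delta t\to 0$: here $\tfrac{\delta s}{\delta t}\to\tfrac{d\alpha}{dt}(t)$ and $D(\delta s)\to\tfrac{dh}{dt}(\alpha(t))$, and since the map $(r,x)\mapsto r\otimes_c x$ is jointly continuous on $\Re\times\D_c^n$ — immediate from Eq.~(\ref{eq:mobius_mult}), and also at $x=\textbf{0}$, where $\|r\otimes_c x\|\to 0$ — the right-hand side converges to $\tfrac{d\alpha}{dt}(t)\otimes_c\tfrac{dh}{dt}(\alpha(t))$. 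This simultaneously shows that the limit defining $\tfrac{d\tilde h}{dt}(t)$ exists and identifies its value, which is the claim.

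The one step I expect to need some care is the degenerate case where $\delta s=0$ for a sequence of nonzero $\delta t\to 0$ (for instance if $\alpha$ is locally constant), since then $D(\delta s)$ is not defined. There I would argue directly: $-h(s)\oplus_c h(s+\delta s)=\textbf{0}$, so the difference quotient vanishes along such steps; moreover such a sequence forces $\tfrac{d\alpha}{dt}(t)=0$, so it suffices to check $\tfrac{\delta s}{\delta t}\otimes_c D(\delta s)\to\textbf{0}$ along the non-degenerate steps, which holds because $D(\delta s)$ stays in a fixed compact sub-ball (being convergent) while $\tfrac{\delta s}{\delta t}\to 0$. For the time-warpings actually used in our application $\alpha$ is strictly increasing, so $\delta s\neq 0$ whenever $\delta t\neq 0$ and this subtlety does not arise at all; everything else is a routine manipulation of the $\oplus_c$/$\otimes_c$ identities of Section~\ref{sec:gyro_sec} together with the elementary continuity of $\otimes_c$.
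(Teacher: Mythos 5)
Your proof is correct and follows essentially the same route as the paper's: both factor the difference quotient for $\tilde h$ as $\tfrac{\delta s}{\delta t}\otimes_c\bigl(\tfrac{1}{\delta s}\otimes_c(-h(s)\oplus_c h(s+\delta s))\bigr)$ via M\"obius scalar associativity (the paper's substitution $u=\delta t(\alpha'(t)+\mathcal{O}(\delta t))$ is exactly your $\delta s$) and then pass to the limit using the gyroderivative of $h$ and continuity of $\otimes_c$. Your explicit handling of the degenerate case $\delta s=0$ and of the joint continuity of $(r,x)\mapsto r\otimes_c x$ is a small refinement over the paper's argument, but not a different approach.
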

\begin{proof}
\begin{align}
\frac{d\tilde{h}}{dt}(t) &=\lim_{\delta t\to 0}\dfrac{1}{\delta t}\otimes_c[-\tilde{h}(t)\oplus_c \tilde{h}(t+\delta t)]\\
&=\lim_{\delta t\to 0}\dfrac{1}{\delta t}\otimes_c[-h(\alpha(t))\oplus_c h(\alpha(t)+\delta t (\alpha'(t)+\mathcal{O}(\delta t)))]\\
&=\lim_{\delta t\to 0}\dfrac{\alpha'(t)+\mathcal{O}(\delta t)}{\delta t(\alpha'(t)+\mathcal{O}(\delta t))}\otimes_c[-h(\alpha(t))\oplus_c h(\alpha(t)+\delta t (\alpha'(t)+\mathcal{O}(\delta t)))]\\
&=\lim_{\delta t\to 0}\dfrac{\alpha'(t)}{\delta t(\alpha'(t)+\mathcal{O}(\delta t))}\otimes_c[-h(\alpha(t))\oplus_c h(\alpha(t)+\delta t (\alpha'(t)+\mathcal{O}(\delta t)))]\\
&=\lim_{u\to 0}\dfrac{\alpha'(t)}{u}\otimes_c[-h(\alpha(t))\oplus_c h(\alpha(t)+u)]\\
&=\frac{d\alpha}{dt}(t)\otimes_c\frac{dh}{dt}(\alpha(t))\quad\quad\quad\ \ \ \ \ \ \ \ \ \ \ \ \ \ \ \ \ \ \ \ \ \ \ \ \ \ \ \ \ \ \ \ \ \ \ \ \ \ \ \ \ \ \text{(M\"obius scalar associativity)}
\end{align}
where we set $u=\delta t(\alpha'(t)+\mathcal{O}(\delta t))$, with $u\to 0$ when $\delta t\to 0$, which concludes.
\end{proof}
Using lemma~\ref{eq:gyrochainrule} and Eq.~(\ref{eq:hyp_eq_diff}), with similar notations as in Eq.~(\ref{eq:warping}) we have 
\begin{equation}
\dfrac{d\tilde{h}}{dt}(t)=\frac{d\alpha}{dt}(t)\otimes_c(-\tilde{h}(t)\oplus_c\varphi^{\otimes_c}(W\otimes_c \tilde{h}(t)\oplus_c U\otimes_c \tilde{x}(t))).
\end{equation}
Finally, discretizing back with Eq.~(\ref{eq:hyp_taylor}), using the left-cancellation law and dropping the tildas yields
\begin{equation}
h(t+1)=h(t)\oplus_c\frac{d\alpha}{dt}(t)\otimes_c(-h(t)\oplus_c\varphi^{\otimes_c}(W\otimes_c h(t)\oplus_c U\otimes_c x(t))).
\end{equation}
Since $\alpha$ is a time-warping, by definition its derivative is positive and one can choose to parametrize it with an update-gate $z_t$ (a scalar) defined with a sigmoid. Generalizing this scalar scaling by the M\"obius version of the pointwise scaling $\odot$ yields the M\"obius matrix scaling $\text{diag}(z_t)\otimes_c\cdot$, leading to our proposed Eq.~(\ref{eq:gru_update}) for the hyperbolic GRU.


\section{More Experimental Investigations}\label{sec:exp_res}

The following empirical facts were observed for both hyperbolic RNNs and GRUs.

We observed that, in the hyperbolic setting, accuracy is often much higher when sentence embeddings can go close to the border (hyperbolic "infinity"), hence exploiting the hyperbolic nature of the space. Moreover, the faster the two sentence norms go to 1, the more it's likely that a good local minima was reached. See figures~\ref{fig:prfx30-gru-all-hyp} and~\ref{fig:prfx30-rnn-all-hyp}.

We often observe that test accuracy starts increasing exactly when sentence embedding norms do. However, in the hyperbolic setting, the sentence embeddings norms remain close to 0 for a few epochs, which does not happen in the Euclidean case. See figures~\ref{fig:prfx30-gru-all-hyp},~\ref{fig:prfx30-rnn-all-hyp} and ~\ref{fig:prfx30-gru-all-eucl}. This mysterious fact was also exhibited in a similar way by~\citep{nickel2017poincar} which suggests that the model first has to adjust the angular layout in the almost Euclidean vicinity of 0 before increasing norms and fully exploiting hyperbolic geometry.

\begin{figure}[h!]
  \centering
  \begin{subfigure}[b]{1\linewidth}
    \includegraphics[width=\linewidth]{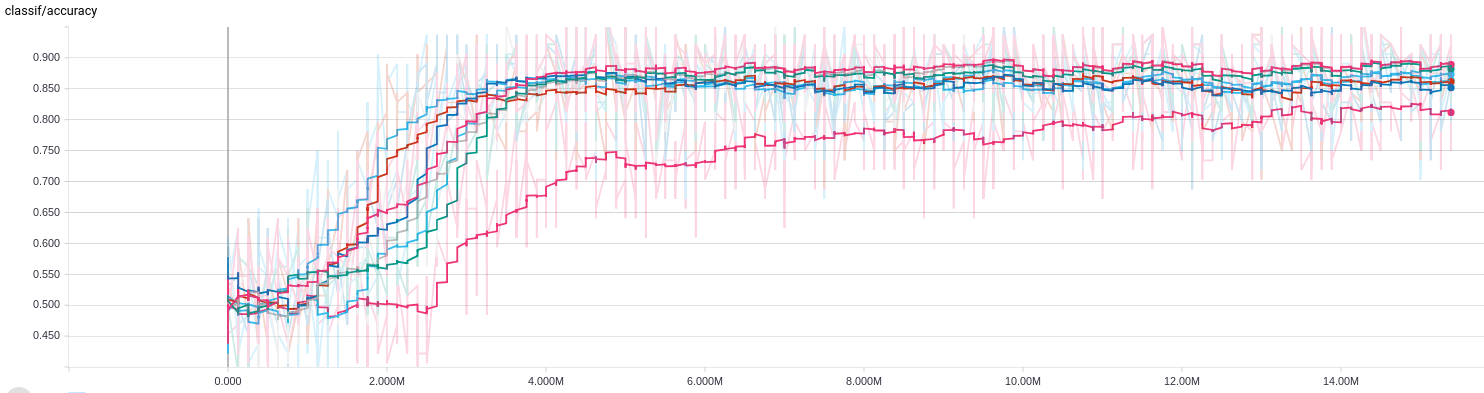}
    \caption{Test accuracy}
  \end{subfigure}\\
  \begin{subfigure}[b]{1\linewidth}
    \includegraphics[width=\linewidth]{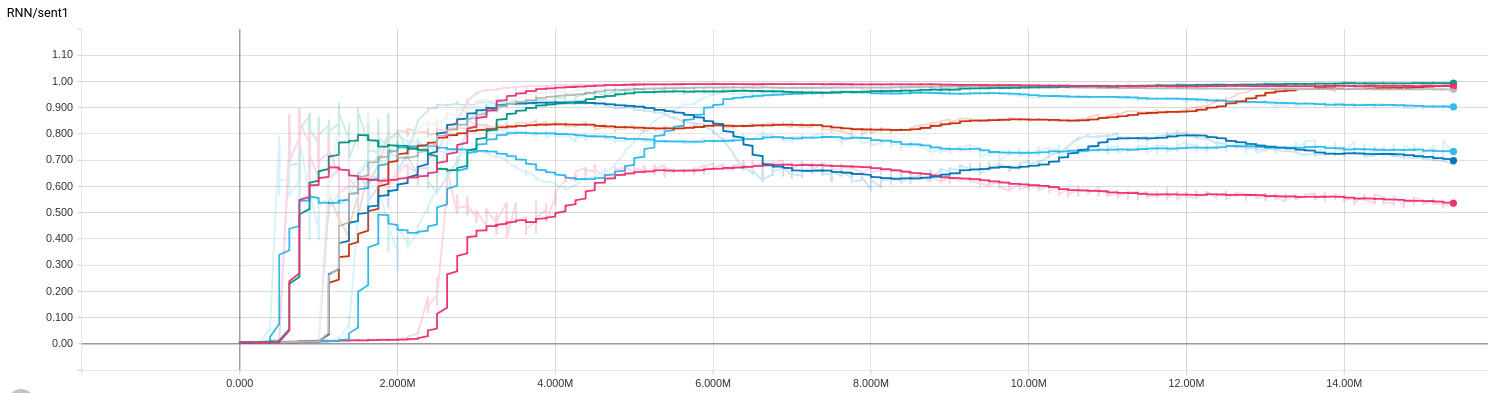}
    \caption{Norm of the first sentence. Averaged over all sentences in the test set.}
  \end{subfigure}
  \caption{PREFIX-30\% accuracy and first (premise) sentence norm plots for different runs of the same architecture: hyperbolic GRU followed by hyperbolic FFNN and hyperbolic/Euclidean (half-half) MLR. The X axis shows millions of training examples processed.}
  \label{fig:prfx30-gru-all-hyp}
\end{figure}

\begin{figure}[h!]
  \centering
  \begin{subfigure}[b]{1\linewidth}
    \includegraphics[width=\linewidth]{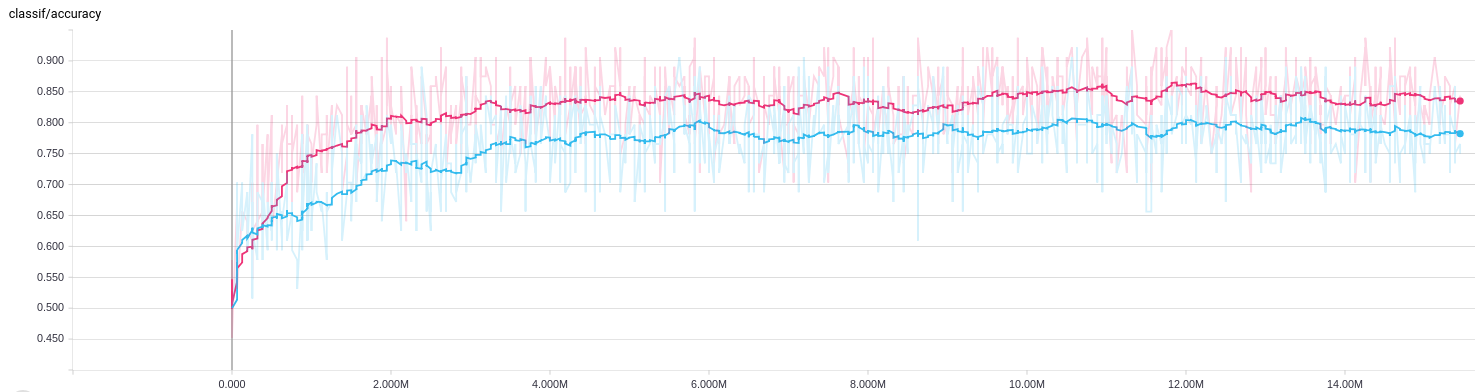}
    \caption{Test accuracy}
  \end{subfigure}\\
  \begin{subfigure}[b]{1\linewidth}
    \includegraphics[width=\linewidth]{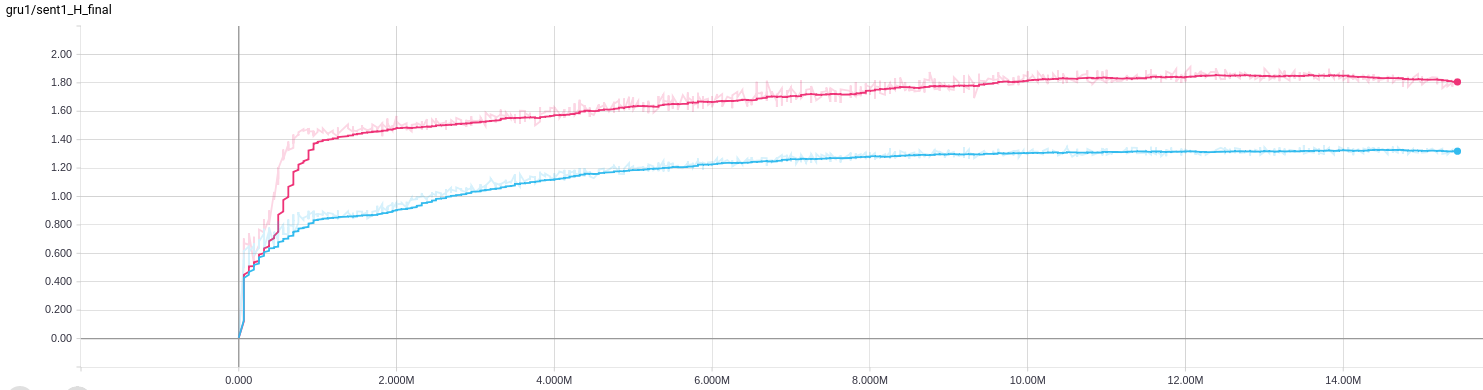}
    \caption{Norm of the first sentence. Averaged over all sentences in the test set.}
  \end{subfigure}
  \caption{PREFIX-30\% accuracy and first (premise) sentence norm plots for different runs of the same architecture: Euclidean GRU followed by Euclidean FFNN and Euclidean MLR. The X axis shows millions of training examples processed.}
  \label{fig:prfx30-gru-all-eucl}
\end{figure}

\begin{figure}[h!]
  \centering
  \begin{subfigure}[b]{1\linewidth}
    \includegraphics[width=\linewidth]{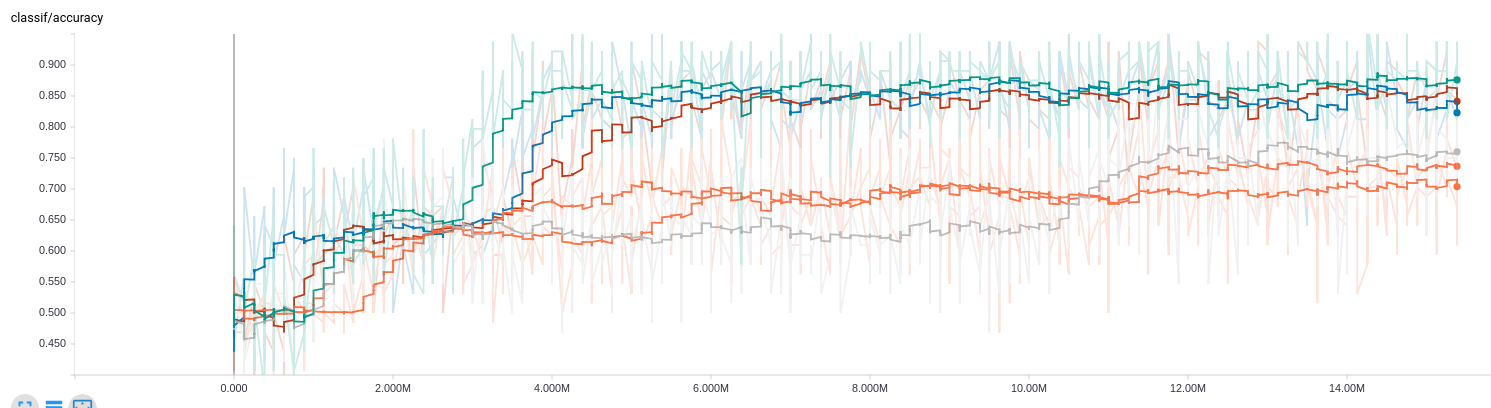}
    \caption{Test accuracy}
  \end{subfigure}\\
  \begin{subfigure}[b]{1\linewidth}
    \includegraphics[width=\linewidth]{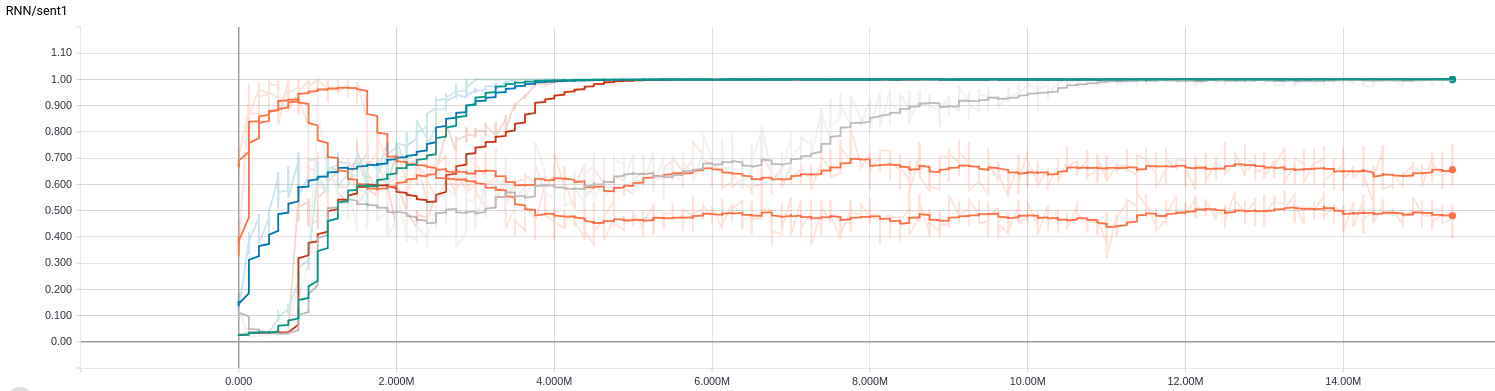}
    \caption{Norm of the first sentence. Averaged over all sentences in the test set.}
  \end{subfigure}
  \caption{PREFIX-30\% accuracy and first (premise) sentence norm plots for different runs of the same architecture: hyperbolic RNN followed by hyperbolic FFNN and hyperbolic MLR. The X axis shows millions of training examples processed.}
  \label{fig:prfx30-rnn-all-hyp}
\end{figure}

\end{document}